\def\eqref#1{equation~\ref{#1}}
\def\1{\bm{1}}
\DeclareMathAlphabet{\mathsfit}{\encodingdefault}{\sfdefault}{m}{sl}
\SetMathAlphabet{\mathsfit}{bold}{\encodingdefault}{\sfdefault}{bx}{n}
\newcommand{\E}{\mathbb{E}}
\theoremstyle{plain}
\newtheorem{theorem}{Theorem}[section]
\newtheorem{proposition}[theorem]{Proposition}
\theoremstyle{definition}
\theoremstyle{remark}
\let\classAND\AND
\let\AND\relax
\let\AND\classAND
\def\gcheckmark{{\color{green!60!black} \ding{51}}}
\def\rxmark{{\color{red} \ding{55}}}
\newcommand\ci{\perp\!\!\!\perp}
\newcommand{\G}{{\mathcal G}}
\DeclareMathOperator{\doo}{do}
\DeclareMathOperator{\pa}{pa}
\DeclareMathOperator{\Pa}{Pa}
\newcommand{\tmlrChanges}[1]{{#1}}
\title{RCT Rejection Sampling for Causal Estimation Evaluation}
\author{}
\author{\name Katherine A.~Keith \email kak5@williams.edu \\
      \addr Williams College
      \AND
      \name Sergey Feldman \email sergey@allenai.org\\
      \addr Allen Institute for Artificial Intelligence 
      \AND
      \name David Jurgens \email jurgens@umich.edu\\
      \addr University of Michigan
      \AND
      \name Jonathan Bragg \email jbragg@allenai.org\\
      \addr Allen Institute for Artificial Intelligence
      \AND 
      \name Rohit Bhattacharya \email rb17@williams.edu\\
      \addr Williams College
      }
\begin{document}

\maketitle

\begin{abstract}
Confounding is a significant obstacle to unbiased estimation of causal effects from observational data. For settings with high-dimensional covariates---such as text data, genomics, or the behavioral social sciences---researchers have proposed methods to adjust for confounding by adapting machine learning methods to the goal of causal estimation.  However, empirical evaluation of these adjustment methods has been challenging and limited. In this work, we build on a promising empirical evaluation strategy that simplifies evaluation design and uses real data: subsampling randomized controlled trials (RCTs) to create confounded observational datasets while using the average causal effects from the RCTs as ground-truth. We contribute a new sampling algorithm, which we call \emph{RCT rejection sampling}, and provide theoretical guarantees that causal identification holds in the observational data to allow for valid comparisons to the ground-truth RCT. Using synthetic data, we show our algorithm indeed results in low bias when oracle estimators are evaluated on the confounded samples, which is not always the case for a previously proposed algorithm. 
In addition to this identification result, we highlight several finite data considerations for evaluation designers who plan to use RCT rejection sampling on their own datasets. As a proof of concept, we implement an example evaluation pipeline and walk through these finite data considerations with a novel, real-world RCT---which we release publicly---consisting of approximately 70k observations and text data as high-dimensional covariates. 
Together, these contributions build towards a broader agenda of improved empirical evaluation for causal estimation.
\end{abstract}

\section{Introduction}\label{sec:intro}

Across the empirical sciences, confounding is a significant obstacle to unbiased estimation of causal effects from observational data. Covariate adjustment on a relevant set of confounders  aka \emph{backdoor adjustment} \citep{pearl2009causality} is a popular technique for mitigating such confounding bias. In settings with only a few covariates, simple estimation strategies---e.g., parametric models or contingency tables---often suffice to compute the adjusted estimates. 
However, modern applications of causal inference have had to contend with thousands of covariates in fields like natural language processing \citep{keith2020text, feder2022causal}, genetics \citep{stekhoven2012causal}, or the behavioral social sciences \citep{li2016matching, eckles2021bias}. In these high-dimensional scenarios, more sophisticated methods are needed and often involve machine learning. Recent approaches include non-parametric and semi-parametric  estimators \citep{hill2011bayesian, chernozhukov2018double, athey2018approximate, farrell2021deep, bhattacharya2022semiparametric}, causally-informed covariate selection \citep{ maathuis2009estimating, belloni2014inference, shortreed2017outcome}, proxy measurement and correction \citep{kuroki2014measurement, wood2018challenges}, and causal representation learning \citep{johansson2016learning, shi2019adapting, veitch2020adapting}.

Despite all this recent work targeted at high-dimensional confounding,
these methods have not been systematically and empirically benchmarked.  
Such evaluations are essential in determining which methods work well in practice and under what conditions.
However, unlike supervised learning problems which have  ground-truth labels available for evaluating predictive performance on a held-out test set, analogous causal estimation problems require ground-truth labels for counterfactual outcomes of an individual under multiple versions of the treatment, data that is generally impossible to measure \citep{holland1986statistics}.

A promising evaluation strategy is to directly subsample data from a randomized controlled trial (RCT) in a way that induces confounding. Causal effect estimates obtained using the confounded observational samples can then be compared against the ground-truth estimates from the RCT to assess the performance of different causal estimators. This idea has appeared in works like \citet{hill2011bayesian} and \citet{zhang2021bounding} and was recently formalized by \citet{gentzel2021and}. 

We contribute to this evaluation strategy --- which we subsequently refer to as \emph{RCT subsampling} --- via theory that clarifies why and how RCT subsampling algorithms should be constrained in order to produce valid downstream empirical comparisons. In particular, we prove previous subsampling algorithms can produce observational samples from which the causal effect is provably not identified, which makes recovery of the RCT ground-truth impossible (even with infinite samples). To address this issue, we present a new RCT subsampling algorithm, which we call \emph{RCT rejection sampling}, that appropriately constrains the subsampling such that the observed data distribution permits identification. 

In addition to improving the theoretical foundations of RCT subsampling, we provide evaluation designers a scaffolding to apply the theory. We implement a proof of concept evaluation pipeline with a novel, real-world RCT dataset---which we release publicly---consisting of approximately 70k observations and text data as high-dimensional covariates. We highlight important finite data considerations: selecting an RCT dataset and examining when empirical evaluation is appropriate; 
empirically verifying a necessary precondition for RCT subsampling; specifying and diagnosing an appropriate confounding function using finite samples; applying baseline estimation models; and briefly speculate on additional challenges that could arise. For each of these considerations, we walk through specific approaches we take in 
 our proof of concept pipeline.

In summary, our contributions are
\begin{itemize}[leftmargin=*, itemsep=0.1em]
    \item We provide a proof using existing results in causal graphical models showing that previous RCT subsampling procedures (e.g.,~\citet{gentzel2021and})
    may draw observational data in a way that prevents non-parametric identification of the causal effect due to selection bias (\S\ref{subsec:gentzel}). 
    \item  We propose a new subsampling algorithm, which we call \emph{RCT rejection sampling}, that is theoretically guaranteed to produce an observational dataset where samples are drawn according to a distribution where the effect is identified via a backdoor functional (\S\ref{subsec:rejection}). Using three settings of synthetic data, we show our algorithm results in low bias, which is not always the case for a previous algorithm (\S\ref{subsec:synthetic}).
    \item  For evaluation designers who plan to use RCT rejection sampling for their own datasets, we highlight several finite data considerations and implement a proof of concept pipeline with a novel, real-world RCT dataset and application of baseline estimation models (\S\ref{sec:finite-data}). 
    \item We release this novel, real-world RCT dataset of approximately 70k observations that has text as covariates (\S\ref{ssec:proof-concept-data}). We also release our code.\footnote{
    \ifthenelse{\boolean{isAnonymized}}
    {Upon publication, we will provide a public repository for the code and data. Currently, this is removed for annonymization. } 
    {Code and data at \url{https://github.com/kakeith/rct_rejection_sampling}.} 
        
    }  
\end{itemize}
These contributions build towards
a more extensive future research agenda in empirical evaluation for causal estimation (\S\ref{sec:conclusion}). 

\section{Related Work in Empirical Evaluation of Causal Estimators} \label{sec:related-work}

\begin{table*}[t!]
  \centering
 \resizebox{\linewidth}{!}{ 
      \begin{tabular}{ll|lll|ll}
      && \multicolumn{3}{c}{\textbf{General}} & \multicolumn{2}{c}{\textbf{Application to Backdoor Adjustment}} \\ 
      \textbf{Dataset} & \textbf{Eval.~strategy} & \textbf{DoF} & \textbf{Data avail.} & \textbf{DGP realism} & \textbf{Covariates (num.)} & \textbf{Data public?} \\
      \toprule
      Simulation, normal assmpts. \citep{d2021deconfounding} &  Synthetic & \rxmark~Many &  \gcheckmark~High & \rxmark~Low & \gcheckmark~High (1000+) & \gcheckmark~Yes \\
      IHDP-ACIC 2016 \citep{dorie2019automated} & Semi-synthetic & \rxmark~Many &  \gcheckmark~High & \rxmark~Medium &  \rxmark~Medium (58) & \gcheckmark~Yes  \\
      PeerRead theorems \citep{veitch2020adapting} & Semi-synthetic & \rxmark~Many &  \gcheckmark~High & \rxmark~Medium & \gcheckmark~High (Text vocab)
      & \gcheckmark~Yes  
      \\
      RCT repositories \citep{gentzel2021and} & RCT subsampling & \gcheckmark~Few &  \gcheckmark~High-RCTs & \gcheckmark~High & \rxmark~Low (1-2) & \gcheckmark~Yes\\
     Job training \citep{lalonde1986evaluating} &  COS & \gcheckmark~Few &  \rxmark~Low & \gcheckmark~High & \rxmark~Low (4) & \gcheckmark~Yes\\
       Facebook peer effects \citep{eckles2021bias} & COS & \gcheckmark~Few &  \rxmark~Low & \gcheckmark~High & \gcheckmark~High (3700) & \rxmark~No \\
      \bottomrule 
      This work & RCT subsampling & \gcheckmark~Few &  \gcheckmark~High-RCTs & \gcheckmark~High & \gcheckmark~High (Text vocab) & \gcheckmark~Yes \\ 
      \bottomrule 
  \end{tabular}
  }
  \caption{
Select related work in empirical evaluation of causal estimators compared on general desiderata of: \gcheckmark~few degrees of freedom (DoF) for the evaluation designer, \gcheckmark~high data availability, and \gcheckmark~realistic data-generating processes (DGP). We also examine the accompanying datasets presented for evaluating backdoor adjustment. Here, we want a \gcheckmark~high number of covariates to make the evaluation non-trivial and \gcheckmark~public availability of the data for reuse and reproducibility.  
\label{t:related-work}}
\end{table*}

As we discussed briefly in Section~\ref{sec:intro}, empirical evaluation of causal estimation methods for observational data is difficult but important.

We argue an evaluation strategy should in general (i) reduce the \emph{evaluation designers' degrees of freedom} i.e.~limit the number of choices researchers have to (inadvertently) pick an evaluation that favors their own method \citep{gentzel2019case};
(ii) has the necessary data (e.g.,~RCTs) available, (iii) ensure the data generating process (DGP) reflects the real world, and (iv) make the data publicly available for reuse and reproducibility. For applications of backdoor adjustment, we argue non-trivial evaluation should additionally (v) include a high number of covariates that could be used in the adjustment set.
Table~\ref{t:related-work} compares select previous work (and our own) according to the above desiderata. We briefly discuss these and other related work, and make a qualitative argument for the RCT subsampling strategy we contribute to. 

\textbf{Synthetic evaluations} are ones in which researchers specify the entire DGP, e.g.,~\citet{d2021deconfounding, schmidt2022searching}. This allows for infinite data availability, but is prone to encoding researcher preferences and can lead to over-simplification (or overly complex DGPs) compared to real-world observational scenarios.

\textbf{Semi-synthetic evaluations} use some real data but specify the rest of the synthetic DGP. This approach has been used in causal inference competitions \citep{dorie2019automated,shimoni2018benchmarking} and settings with text-data as confounding variables \citep{roberts2020adjusting,veitch2020adapting,weld2022adjusting}. 
Other semi-synthetic work fits generative models to real-world data \citep{neal2020realcause,parikh2022validating} or uses pre-trained language models to generate high-dimensional confounders from variables in a synthetic DGP \citep{wood2021generating}. 
Although more realistic than synthetic data, semi-synthetic DGPs can also make unrealistic assumptions;
for example, \citet{reisach2021beware} demonstrate this issue in the context of evaluating causal discovery algorithms.

\textbf{Constructed observational studies} (COSs) start with RCTs and then find non-experimental control samples that come from a similar population \citep{lalonde1986evaluating,hill2004comparison,arceneaux2006comparing,shadish2008can,jaciw2016assessing,gordon2019comparison,eckles2021bias,zeng2022uncovering,gordon2022close}.\footnote{For example, \citet{lalonde1986evaluating} adjusts for four covariates---age, years of schooling, high school drop-out status, and race (Table 4, Footnote C in \citeauthor{lalonde1986evaluating})---in his non-experimental group in a study of the effects of job training on earnings.
}
The advantage of COSs over (semi-)synthetic data is that they have few researcher degrees of freedom; however, non-experimental control groups often do not exist or do not come from similar-enough populations; see \citet{dahabreh2022randomized} for more details on identification from COSs. 

\textbf{Subsampling RCTs} uses an RCT as ground-truth and then subsamples the RCT data to create a confounded observational dataset. 
For example, \citet{zhang2021bounding} subsample from the International Stroke Trial (IST) of roughly 20k patients to estimate the treatment effect of aspirin allocation. This strategy also appears in prior work \citep{hill2011bayesian,kallus2018removing} and was recently formalized by \citet{gentzel2021and}.
While this approach is limited by the availability of RCTs and sampling decreases the number of units available to the estimation methods, it does not require the comparable non-experimental control group required by COSs, resulting in greater data availability. There are also fewer researcher degrees of freedom compared to synthetic or semi-synthetic approaches. Because of these tradeoffs, we believe this is one of the most promising strategies for empirically evaluating causal estimation methods and we build upon this strategy in the remainder of this work.

\section{Subsampling from RCTs}\label{sec:sampling}

We preface this section with a brief description of causal graphs, a prerequisite to understanding subsequent results. Then we provide identification and non-identification proofs for RCT subsampling algorithms and evidence from synthetic data.

\subsection{Background: Causal graphical models}

A causal model of a directed acyclic graph (causal DAG) $\G(V)$ can be viewed as the set of distributions induced by a system of structural equations: For each variable $V_i \in V$ there exists a structural equation $V_i \gets f_i(\pa_i, \epsilon_i)$ \citep{pearl2009causality}. This function maps the variable's parents' values---$\pa_i$ of $V_i$ in $\G(V)$---and an exogenous noise term\footnote{Typically these noise terms are assumed to be mutually independent, but this assumption is not strictly necessary \citep{richardson2013single}.  Our results are non-parametric in the sense that we do not require any  distributional assumptions on the noise terms or the specific form (e.g.,~linearity) of the structural equations $f_i(\cdot)$.}, $\epsilon_i$, to values of $V_i$. 
The system of equations induces a joint distribution $P(V)$ that is Markov relative to the DAG $\G(V)$, i.e., $P(V=v) = \prod_{V_i \in V}P(V_i=v_i \mid \Pa_i=\pa_i)$. Independences in the distribution can be read off from $\G$ via the well-known d-separation criterion \citep{pearl2009causality}.
Interventions in the model are typically formalized using the do-operator \citep{pearl2009causality}, where $Y|\doo(T=t)$ denotes the value of an outcome $Y$ under an intervention that sets the treatment $T$ to value $t$.

Here, our causal estimand of interest is the average treatment effect (ATE)  defined as, 
\begin{align}
    \text{ATE} \equiv \E[Y | \doo(T=t)] - \E[Y | \doo(T=t')],
\end{align}
where $t$ and $t'$ denote distinct values of $T$. A causal parameter is said to be \emph{identified} if it can be expressed as a function of the observed data $P(V)$. Given a set of variables $Z \subset V$ that satisfy the \emph{backdoor criterion} w.r.t $T$ and $Y$\footnote{The set $Z$ satisfies the backdoor criterion if no variable in $Z$ is a causal descendant of $T$, and $Z$ blocks all backdoor paths between $T$ and $Y$, i.e., all paths of the form $T \leftarrow \cdots \rightarrow Y$.},
the ATE is identified via the well-known \emph{backdoor adjustment functional} \citep{pearl1995causal}.

\begin{figure}[t!]
        \begin{center}
            \scalebox{0.75}{
                \begin{tikzpicture}[>=stealth, node distance=2cm]
                    \tikzstyle{square} = [draw, thick, minimum size=1.0mm, inner sep=3pt]
                    
                    \begin{scope}[xshift=0cm]
                        \path[->, very thick]
                        node[] (a) {$T$} 
                        node[right of=a] (y) {$Y$}
                        node[above right of=a, xshift=-0.6cm] (c) {$C$}
                        node[below of=a, yshift=1cm, xshift=1cm] (label) {(a)}
                        
                        (a) edge[blue] (y)
                        (c) edge[blue] (y)
                        ;
                    \end{scope}
                    
                    \begin{scope}[xshift=5.75cm]
                        \path[->, very thick]
                        node[] (a) {$T$} 
                        node[right of=a] (y) {$Y$}
                        node[above of=a, yshift=-0.6cm] (c) {$C$}
                        node[left of=a] (s) {$S=1$}
                        node[below of=a, yshift=1cm] (label) {(b)}
                        
                        (a) edge[blue] (y)
                        (c) edge[blue] (y)
                        (a) edge[blue] (s)
                        (c) edge[blue] (s)
                        ;
                    \end{scope}
                    
                    \begin{scope}[xshift=9.25cm]
                        \path[->, very thick]
                        node[] (a) {$T$} 
                        node[right of=a] (y) {$Y$}
                        node[above right of=a, xshift=-0.6cm] (c) {$C$}
                        node[below of=a, yshift=1cm, xshift=1cm] (label) {(c)}
                        
                        (a) edge[blue] (y)
                        (c) edge[blue] (y)
                        (c) edge[blue] (a)
                        ;
                    \end{scope}

                \end{tikzpicture}
            }
        \end{center}
        \caption{ Causal DAGs (a) corresponding to an RCT; (b) representing a sampling procedure; (c) corresponding to an observational study where $C$ satisfies the backdoor criterion. 
        }
        \label{fig:selection-bias}
\end{figure}
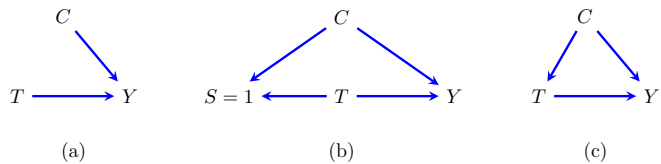

\subsection{RCT subsampling: Setup and conditions}

We now describe the specific setup and objectives of RCT subsampling.\footnote{Here our target of interest is the ATE, though similar principles apply to other causal parameters like conditional average treatment effects.} 
We start with a dataset $D_{\text{RCT}}$ consisting of $n$ iid draws from an RCT of pre-treatment covariates $C = \{C_1, \dots, C_k\}$, treatment $T$, and outcome $Y$. Since this data is assumed to come from an RCT, the observed data distribution $P(C, T, Y)$ is Markov relative to the causal DAG shown in Fig.~\ref{fig:selection-bias}(a) where $T \ci C$. The goal of RCT subsampling is to construct an observational dataset $D_{\text{OBS}}$ that consists of $m \leq n$ iid draws from $D_{\text{RCT}}$ that satisfies the following conditions which enable appropriate evaluation of causal estimation methods:
\begin{enumerate}
    \item[(I)] \textbf{Dependence induced.} $D_{\text{OBS}}$ consists of samples drawn according to a new distribution $P^*(C, T, Y)$ that satisfies the dependence relation $T \not\ci C$. 
    \item[(II)] \textbf{ATE identified.} There exists a functional $g$ of the RCT distribution and a functional $h$ of the subsampled data distribution such that $\text{ATE} = g(P(C, T, Y)) = h(P^*(C, T, Y))$. 
\end{enumerate}

Here, (II) is an important identification pre-condition that ensures that it is possible, at least in theory, to compute  estimates of the ATE from $D_{\text{OBS}}$ to match the ATE from $D_{\text{RCT}}$, the latter of which is treated as ground-truth in evaluation. From Fig.~\ref{fig:selection-bias}(a) it is clear that two sets of variables satisfy the backdoor criterion: the set $C$ and the empty set. Thus, the ATE is identified from the RCT distribution $P(C, T, Y)$ via the following two backdoor adjustment functionals,

{\begin{small}
\begin{align}
    \text{ATE} &= \sum_c P(c) \times \big( \E[Y\mid t, c] - \E[Y\mid t', c] \big) \label{eq:conditional-id} \\
    &= \E[Y \mid t] - \E[Y\mid t']. \label{eq:marginal-id}
\end{align}
\end{small}}

\noindent Thus, a subsampling algorithm satisfies (II) if there is a  functional $h(P^*(C, T, Y))$ that is equal to \eqref{eq:conditional-id} or \eqref{eq:marginal-id}. 

For our purposes, we add the condition (I) so that estimation in the observational data does not reduce to \eqref{eq:marginal-id}. That is, we aim to produce samples according to a distribution $P^*$ such that some adjustment is in fact necessary to produce unbiased ATE estimates. 
We note that (I) by itself is not sufficient to guarantee this;  RCT subsampling procedures also require that there exists at least one pre-treatment covariate correlated with the outcome, i.e., $\exists \ C_i \in C$ such that $C_i \not\ci Y$ in $P(C, T, Y)$ \citep{gentzel2021and}. However, this condition is easily testable, and we implement these checks in our synthetic experiments and real-world proof of concept (\S\ref{ss:precondition}). 

We now show a theoretical gap in existing approaches to subsampling RCTs, and propose a new algorithm that is theoretically guaranteed to satisfy conditions (I) and (II).

\subsection{Non-identification in prior work}\label{subsec:gentzel}
We claim that prior work that proposes RCT subsampling can result in observational samples from which the causal effect is \emph{not identified} non-parametrically unless additional constraints are placed on the subsampling process.
We consider Algorithm 2 in \citet{gentzel2021and} which does not explicitly impose such constraints and can be summarized as follows. Let $S$ be a binary variable indicating selection into the observational data from  $D_{\text{RCT}}$. 
A structural equation $S \leftarrow \mathbbm{1}(T = \text{Bernoulli}(f(C)))$ is used to generate the selection variable,
where $f$ is a function defined by the researcher and $\mathbbm{1}$ corresponds to the indicator function. $D_\text{{OBS}}$ is  created by retaining only samples from $D_{\text{RCT}}$ where $S=1$. This results in $P^*(C, T, Y) = P(C, T, Y \mid S=1)$ which is Markov relative to the causal DAG in  Fig.~\ref{fig:selection-bias}(b). From this DAG, it is easy to check via d-separation that condition (I) is satisfied as $T\not\ci C \mid S=1$. However, the following proposition shows that condition (II) is not satisfied.

\begin{proposition}
Given $n$ iid samples from a distribution $P$ that is Markov relative to Fig.~\ref{fig:selection-bias}(a),  Algorithm 2 in \citet{gentzel2021and} draws samples according to a distribution $P^*$ such that condition (II) is not satisfied. \label{proposition-gentzel}
\end{proposition}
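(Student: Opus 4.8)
The plan is to show that the ATE, viewed as a functional of $P^*(C,T,Y) = P(C,T,Y \mid S=1)$, is not identified nonparametrically — i.e., there exist two distributions $P_1, P_2$ Markov relative to Fig.~\ref{fig:selection-bias}(a) with the same induced selected distribution $P^*$ but different ATE. Equivalently, and more in the spirit of the graphical-models machinery invoked in the paper, I would argue directly from the selection DAG in Fig.~\ref{fig:selection-bias}(b): conditioning on the collider $S$ (a common child of $T$ and $C$) opens a noncausal path $T \rightarrow S \leftarrow C \rightarrow Y$, so $Y$ is not d-separated from $T$ given any subset of the observed variables in a way that isolates the causal effect. Since $C$ is the only pre-treatment covariate and it is a parent of $S$, adjusting for $C$ does not close this selection path; there is no valid adjustment set, and by completeness of the ID algorithm for DAGs with selection bias \citep[e.g.,][]{bareinboim2012controlling, correa2019identification} the interventional distribution $P(Y \mid do(t))$ is not identified from $P^*$.

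Concretely, the key steps in order are: (1) restate that $D_{\text{OBS}}$ is generated with $S \gets \mathbbm{1}(T = \text{Bernoulli}(f(C)))$, so $P^* = P(C,T,Y\mid S=1)$ is Markov relative to Fig.~\ref{fig:selection-bias}(b), with edges $T\to Y$, $C\to Y$, $T\to S$, $C\to S$; (2) observe that in this graph the only paths from $T$ to $Y$ are the direct edge $T\to Y$ (causal) and $T \to S \leftarrow C \to Y$ (noncausal, through the collider $S$); (3) note that since we always condition on $S=1$, the collider is unblocked, and the only node on the confounding path that could block it — namely $C$ — cannot be used, because $C$ would have to be conditioned on to block $C\to Y$ but that does nothing to the already-open collider segment $T\to S\leftarrow C$; hence no adjustment set yields $\text{ATE}$; (4) to make the nonidentification rigorous rather than heuristic, exhibit an explicit counterexample: choose a simple parametric family (e.g., all variables binary, $f(C)$ a specific function) and construct two structural models agreeing on $P^*(C,T,Y)$ but with distinct $\mathbb{E}[Y\mid do(t)] - \mathbb{E}[Y\mid do(t')]$, or alternatively cite the known result (hedge / thicket criterion of the generalized ID algorithm) that selection on a child of both treatment and a cause of the outcome renders the effect nonidentified. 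Finally, conclude that since neither \eqref{eq:conditional-id} nor \eqref{eq:marginal-id} can be recovered as any functional $h(P^*(C,T,Y))$, condition (II) fails.

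The main obstacle is step (4): moving from the intuitive "the collider path is open and cannot be closed" statement to a watertight nonidentification argument. A clean d-separation argument shows no backdoor adjustment works, but nonidentification in the strong sense (no functional whatsoever of $P^*$ equals the ATE) requires either invoking completeness of an existing ID-with-selection algorithm or building the two-model counterexample by hand. I would lean on the latter for self-containedness: pick $C, T, Y \in \{0,1\}$, fix $P(C)$ and the RCT-mandated $P(T\mid C)=P(T)$, pick a nontrivial $f$, and parametrize $P(Y\mid T,C)$; then show the map from these parameters to $P^*$ loses information precisely about the quantities entering the ATE (the selection reweighting by $P(S=1\mid T,C) = \mathbbm{1}(\cdot)$-style indicator collapses strata in a way that is not invertible). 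A careful but short calculation on a $2\times2\times2$ table suffices, and it also connects nicely to the synthetic experiments reported later in \S\ref{subsec:synthetic} where the previous algorithm is shown to have high bias.
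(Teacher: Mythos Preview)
Your overall plan is sound, and steps (1), (2), and (4) are on target. However, step (3) contains a genuine error in the d-separation reasoning. Conditioning on $C$ \emph{does} block the path $T \to S \leftarrow C \to Y$: the node $C$ is a non-collider on this path, so including it in the conditioning set blocks the path regardless of whether the collider at $S$ is open. The failure of backdoor adjustment here is not because the noncausal path cannot be blocked; indeed, $Y \ci S \mid T, C$ holds in Fig.~\ref{fig:selection-bias}(b), so $\E[Y\mid T,C,S=1]=\E[Y\mid T,C]$ is recoverable. The obstruction is that the \emph{marginal} $P(C)$ is not recoverable from $P(C\mid S=1)$, so the outer expectation in \eqref{eq:conditional-id} cannot be computed from $P^*$. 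Your heuristic ``no adjustment set works because the collider path stays open'' misdiagnoses the mechanism and would not survive scrutiny.

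You are right that step (4) is where the real work lies, and both of your proposed remedies (cite a completeness result, or build a two-model counterexample) are valid. The paper takes the first route, but more sharply than your citations suggest: it invokes Theorem~2 of \citet{bareinboim2015recovering}, which gives a sound-and-complete graphical criterion for recoverability of causal effects under selection bias --- the effect is identified if and only if, after deleting $T$ from the graph, no causal ancestor of $Y$ is also a causal ancestor of $S$. In Fig.~\ref{fig:selection-bias}(b) with $T$ deleted, $C$ remains an ancestor of both $Y$ and $S$, so non-identification follows in one line. This is considerably cleaner than constructing an explicit $2\times 2\times 2$ counterexample, and it is exactly the kind of off-the-shelf completeness result you gestured at; you just need the right citation and the right criterion to check.
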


We provide a proof in Appendix~\ref{sec:appendix-proof-gentzel}.

The intuition behind the proof of Proposition~\ref{proposition-gentzel} is as follows. Identification of the ATE relies on two pieces: the conditional mean of the outcome given treatment and covariates and the marginal distribution of covariates. From Fig.~\ref{fig:selection-bias}(b), we have $\E[Y | T, C] = \E[Y | T, C, S=1]$, but $P(C) \not= P(C|S=1)$. Indeed this marginal distribution cannot be identified via any non-parametric functional of the subsampled distribution $P^*(C, T, Y)$ \citep{bareinboim2015recovering}. However, this non-identification result holds assuming that there is no additional knowledge/constraints on how $P^*$ is generated; in the next section we modify the sampling to place constraints on the generated distribution $P^*$ that mitigate this issue.

\subsection{RCT rejection sampling}\label{subsec:rejection}
We propose Algorithm~\ref{alg:R2C}, which uses a rejection sampling procedure to subsample RCTs. 
Rejection sampling is useful when the target distribution is difficult to sample from but there exists a proposal distribution which is easier to sample from and the proposal distribution (times a constant) forms an ``upper envelope'' for the target distribution \cite[Chapter 23.2]{murphy2012machine}.
Similar ideas on resampling data based on ratios of propensity scores appear in \citet{thams2023statistical} and \citet{bhattacharya2022testability} in the context of testing independence constraints in post-intervention distributions. Though the rejection sampler also selects samples based on a function of $T$ and $C$, as in Fig.~\ref{fig:selection-bias}(b), we prove that additional constraints placed by the sampling strategy ensure identification holds in the new observed data distribution. 

The intuition behind our algorithm is as follows. Sufficient constraints for maintaining identifiability of the ATE in $P^*(C, T, Y)$ via the  functional in \eqref{eq:conditional-id}  are to ensure that
$P^*(C)=P(C)$ and $P^*(Y\mid T, C)=P(Y\mid T, C)$.\footnote{One could also consider maintaining equality of just the conditional mean of $Y$ rather than the full conditional density.} When this holds, it follows that \eqref{eq:conditional-id} is equivalent to the adjustment functional $h(P^*(C, T, Y)) = \sum_c P^*(c) \times (\E^*[Y\mid T=t,  c] - \E^*[Y\mid T=t',  c])$, where $\E^*$ denotes the expectation taken w.r.t $P^*(Y\mid T, C)$. To also satisfy (I), we propose resampling with weights that modify $P(T)$ to a new conditional distribution $P^*(T\mid C)$. 

The considerations listed in the prior paragraph inform our choice of an acceptance probability of $\frac{1}{M} \times \frac{P^*(T\mid C)}{P(T)}$ in the rejection sampler, where $M$ is the usual upper bound on the likelihood ratio used in the rejection sampler, which in our case is $\frac{P^*(T|C)}{P(T)}$.\footnote{In practice, we approximate $M$ from $D_{\text{RCT}}$ as $\frac{\max_i \in \{1, \dots, n\} P^*(T=t_i|C_i)}{\min_{i \in \{1, \dots, n\}} \widehat{P}(T=t_i)}$.} Here, $P^*(T\mid C)$ is a function specified by the evaluation designer that satisfies positivity ($\forall c, 0 < P^*(T\mid C=c) < 1$ almost surely), and is a non-trivial  function of $C$ in the sense that $P^*(T\mid C) \not= P^*(T)$ for at least some values of $T$ and $C$.

\begin{algorithm}[t]
\begin{algorithmic}[1]
   \STATE {\bfseries Inputs:} 
   $D_{\text{RCT}}$ consisting of $n$ i.i.d.~draws from $P(C, T, Y)$; $P^{*}(T|C)$, a function specified by evaluation designers; $M\geq\sup \frac{P^*(T|C)}{P(T)}$, a constant computed empirically
 \STATE {\bfseries Output:} $D_{\text{OBS}}$, a subset of $D_{\text{RCT}}$ constructed according to a distribution $P^*(C, T, Y)$ which satisfies conditions (I) and (II)
 \STATE
   \FOR{each unit $i \in D_{\text{RCT}}$}
    \STATE Sample $U_i$ uniform on $(0, 1)$
   \IF{$U_i > \frac{P^*(T=t_i|C_i)}{\hat{P}(T=t_i) M}$}
   \STATE Discard $i$
   \ENDIF
   \ENDFOR
   \STATE  {\bfseries Return:} $D_{\text{OBS}} \leftarrow D_{\text{RCT}} - \{\text{discarded units} \}$
\end{algorithmic}
\caption{RCT rejection sampling} 
   \label{alg:R2C}
\end{algorithm}

\begin{theorem}
Given $n$ iid samples from a distribution $P$ that is Markov relative to Fig.~\ref{fig:selection-bias}(a), a confounding function $P^*(T|C)$ satisfying positivity, and $M\geq\sup \frac{P^*(T|C)}{P(T)}$, the rejection sampler in Algorithm~\ref{alg:R2C} draws samples from a distribution $P^*$, such that conditions (I) and (II) are satisfied. \label{theorem:rejection}
\end{theorem}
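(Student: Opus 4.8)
The plan is to run the standard rejection-sampling argument to pin down the distribution of a retained unit, read off its relevant marginal and conditional pieces, and then invoke backdoor identification in the resulting observational distribution. Throughout I would analyze the idealized sampler in which the true $P(T)$ is used in place of the empirical $\widehat P(T)$ (the discrepancy is discussed at the end).

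First I would characterize the accepted-sample distribution. A unit drawn from $P$ is retained with probability $a(C,T) = \frac{P^*(T\mid C)}{P(T)\,M}$, which is a valid acceptance probability since $M\ge\sup\frac{P^*(T\mid C)}{P(T)}$ forces $a(C,T)\le 1$ (and positivity of $P^*(T\mid C)$, together with $T,C$ ranging over a bounded support, makes $M$ finite). By the rejection-sampling identity, a retained unit has distribution proportional to $P(C,T,Y)\,a(C,T)$. Because $P$ is Markov relative to Fig.~\ref{fig:selection-bias}(a) we have $P(C,T,Y) = P(C)\,P(T)\,P(Y\mid T,C)$, hence
\[
P(C,T,Y)\,a(C,T) \;=\; \tfrac{1}{M}\,P(C)\,P^*(T\mid C)\,P(Y\mid T,C).
\]
Summing this over $Y$, then $T$, then $C$, and using $\sum_y P(y\mid t,c)=1$ and $\sum_t P^*(t\mid c)=1$, shows the normalizing constant is exactly $1/M$ (equivalently, the marginal acceptance probability is $1/M$). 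So the retained units are iid draws from
\[
P^*(C,T,Y) \;=\; P(C)\,P^*(T\mid C)\,P(Y\mid T,C).
\]

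From this factorization I would read off, by marginalization, that $P^*(C)=P(C)$; that the conditional $P^*(T\mid C)$ is the designer-specified confounding function; and that $P^*(Y\mid T,C)=P(Y\mid T,C)$, so in particular $\E^*[Y\mid t,c]=\E[Y\mid t,c]$ for every $(t,c)$ with $P(c)>0$ (positivity of $P^*(T\mid C)$ ensures these conditionals are well defined on the support of $P(C)$). The factorization above is precisely the Markov factorization of the observational DAG in Fig.~\ref{fig:selection-bias}(c), so condition (I) follows: since $P^*(T\mid C)$ is by assumption a non-trivial function of $C$, we have $T\not\ci C$ under $P^*$. For condition (II), take $g$ to be the functional in \eqref{eq:conditional-id} and let $h$ be the backdoor-adjustment functional for $C$ in $P^*$, namely $h(P^*)=\sum_c P^*(c)\big(\E^*[Y\mid t,c]-\E^*[Y\mid t',c]\big)$. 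In Fig.~\ref{fig:selection-bias}(c), $C$ is not a descendant of $T$ and blocks the unique backdoor path $T\leftarrow C\rightarrow Y$, so $h$ is a valid identification functional; substituting $P^*(c)=P(c)$ and $\E^*[Y\mid\cdot,c]=\E[Y\mid\cdot,c]$ gives $h(P^*)=g(P)=\text{ATE}$, which is exactly condition (II).

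The step I expect to be the main obstacle — or at least the one requiring a caveat — is the use of $\widehat P(T)$ rather than $P(T)$ in Algorithm~\ref{alg:R2C}. With $\widehat P(T)$ the retained-unit distribution picks up an extra factor $P(T)/\widehat P(T)$, so the clean equalities $P^*(C)=P(C)$ and $P^*(Y\mid T,C)=P(Y\mid T,C)$ only hold exactly when $P(T)$ is known; for discrete $T$, however, $\widehat P(T)\to P(T)$ almost surely, so the sampler converges to the stated $P^*$. I would therefore prove the theorem for the idealized sampler with known $P(T)$ and relegate the finite-sample gap to the finite-data considerations of \S\ref{sec:finite-data}.
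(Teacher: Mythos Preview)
Your proposal is correct and follows essentially the same rejection-sampling argument as the paper: both derive the likelihood ratio $\frac{P^*(C,T,Y)}{P(C,T,Y)} = \frac{P^*(T\mid C)}{P(T)}$ from the RCT factorization and the imposed constraints $P^*(C)=P(C)$, $P^*(Y\mid T,C)=P(Y\mid T,C)$, then conclude (I) from the non-trivial dependence in $P^*(T\mid C)$ and (II) from the backdoor functional carrying over. Your version is slightly more explicit (you compute the normalizing constant and verify the backdoor criterion in Fig.~\ref{fig:selection-bias}(c) directly) and you flag the $\widehat P(T)$ versus $P(T)$ discrepancy, which the paper's proof silently elides; neither of these changes the substance of the argument.
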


\begin{proof}
Rejection sampling generates samples from a target distribution $P^*(V_1, \dots, V_k)$ by accepting samples from a proposal distribution $P(V_1, \dots, V_K)$ with probability
\begin{align*}
    \frac{1}{M}\times \frac{P^*(V_1, \dots, V_k)}{P(V_1, \dots, V_k)},
\end{align*}
where $M$ is a finite upper bound on the likelihood ratio $P^*/P$ over the support of $V_1, \dots, V_k$.

We start with samples from an RCT, so our proposal distribution factorizes according to the causal DAG in Fig.~\ref{fig:selection-bias}(a): $P(C, T, Y) = P(C)\times P(T)\times P(Y\mid T, C)$.

Our target distribution is one where $T\not\ci C$, and factorizes as $P^*(C, T, Y) = P^*(C)\times P^*(T\mid C) \times P^*(Y\mid T, C)$, with additional constraints that $P^*(C) = P(C)$ and $P^*(Y\mid T, C) = P(Y\mid T, C)$. This establishes the likelihood ratio,
\begin{align*}
    \frac{P^*(C, T, Y)}{P(C, T, Y)} &=  \frac{P(C)\times P^*(T\mid C) \times P(Y\mid T, C)}{P(C) \times P(T) \times P(Y\mid T, C)} \\
    &= \frac{P^*(T\mid C)}{P(T)},
\end{align*}
and any choice of $M \geq \sup \frac{P^*(T|C)}{P(T)}$ used in the rejection sampler in Algorithm~\ref{alg:R2C} produces samples from the desired distribution $P^*$, where the additional constraints satisfy the identification condition (II) and specification of $P^*(T|C)$ such that it truly depends on $C$ satisfies condition (I).
\end{proof}

Since $P^*$ satisfies $T\not\ci C$ and yields identification via the usual adjustment functional obtained in a conditionally ignorable causal model, Algorithm~\ref{alg:R2C} can be thought of as producing samples exhibiting confounding bias similar  to the causal DAG in Fig.~\ref{fig:selection-bias}(c), despite the selection mechanism. 
A longer argument for this qualitative claim is
in Appendix~\ref{sec:appendix-markov}.

We conclude this section by noting that similar to prior works on RCT subsampling algorithms, the subsampling strategy in Algorithm~\ref{alg:R2C} only requires researchers to specify a single function, $P^*(T\mid C)$. Hence, our procedure satisfies our original desideratum of limited researcher degrees of freedom, while providing stronger theoretical guarantees for downstream empirical evaluation. However, specification of $P^*(T\mid C)$ may still be challenging when $C$ is high-dimensional. In Section~\ref{ss:confounding-function}, we discuss this finite data consideration and we use a proxy strategy for our proof of concept in which we have a low-dimensional confounding set $C$ along with a set of high-dimensional covariates $X$ that serve as proxies of this confounding set.

\subsection{Evidence from synthetic data}\label{subsec:synthetic}

\begin{table*}[t!]
\centering 
\resizebox{0.95\linewidth}{!}{
\begin{tabular}{lllrrr}
  \toprule
 &\textbf{Synthetic DGP Setting} & \textbf{Sampling Algorithm} & \textbf{Abs. Bias (std.)} & \textbf{Rel.~Abs.~Bias (std.)} & \tmlrChanges{\textbf{CI Cov.}} \\
  \toprule 
  \multirow{2}{*}{1} & \multirow{2}{*}{$|C|=1$, $P(T=1)=0.3$} &Algorithm 2 from \citet{gentzel2021and} & 0.222 (0.010) & 0.089 (0.004) & \tmlrChanges{0.00} \\
  && RCT rejection sampling (This work) & \textbf{0.009 (0.007)} & \textbf{0.004 (0.003)} &  \tmlrChanges{0.97}\\
 \midrule 
 \multirow{2}{*}{2} & \multirow{2}{*}{$|C|=1$, $P(T=1)=0.5$} & Algorithm 2 from \citet{gentzel2021and} & 0.009 (0.006) & 0.003 (0.003) &\tmlrChanges{0.98}\\ 
 && RCT rejection sampling & 0.007 (0.005) & 0.003 (0.002) & \tmlrChanges{0.98}\\ 
 \midrule 
 \multirow{2}{*}{3} & \multirow{2}{*}{$|C| =5$, Nonlinear} & Algorithm 2 from \citet{gentzel2021and} & 0.252 (0.010) & 0.979 (0.037) & \tmlrChanges{0.00}\\ 
 && RCT rejection sampling & \textbf{0.012 (0.009)} &  \textbf{0.046 (0.034)} & \tmlrChanges{0.98}\\
 \bottomrule
\end{tabular}
}
\caption{
Absolute bias (abs.~bias) between ATE from $D_{\text{RCT}}$ and the estimated ATE via backdoor adjustment on $D_{\text{OBS}}$ created by each sampling algorithm. We also report abs.~bias relative to the RCT ATE (rel.~abs.~bias) and the mean and standard deviation (std.) across samples from 1000 random seeds. 
\tmlrChanges{In the final column, we report the confidence interval coverage (CI Cov.)---the proportion of 1000 random seeds for which the 95\% confidence interval contains the true (RCT) ATE.}
The DGPs for Settings 1-3 are given in Appendix~\ref{sec:appendix-dgps}.
\label{table:sampling-results}
}
\end{table*}

Using synthetic DGPs for $D_{\text{RCT}}$, we produce $D_{\text{OBS}}$ using Algorithm 2 from \citet{gentzel2021and} and separately via our RCT rejection sampler. We then compute ATE estimates using \eqref{eq:conditional-id} for $D_{\text{OBS}}$ and compare it to the ground-truth estimates using \eqref{eq:marginal-id} in $D_{\text{RCT}}$. Section~\ref{sec:appendix-dgps} in the appendix gives the full details of the data-generating processes (DGPs) for three settings. Briefly, the DGP in Setting 1 has a single confounding covariate $C$, sets $P(T=1)=0.3$, and has an interaction term $TC$ in the structural equation for $Y$. Setting 2 is the same as Setting 1 except we set $P(T=1)=0.5$.  Setting 3 is a non-linear DGP with five covariates, $C_1, \dots, C_5$.  All methods are provided with the true adjustment set and functional form for the outcome regression, i.e., our experiments here use oracle estimators to validate the identification theory proposed in the previous subsection.

\tmlrChanges{
We construct 95\% confidence intervals via bootstrapping and the percentile method \cite{wasserman2004all} and report confidence interval coverage. After obtaining a sample $D_{\text{OBS}}$ from the RCT via either RCT rejection sampling or Algorithm 2 from \citeauthor{gentzel2021and}, we resample $D_{\text{OBS}}$ with replacement and calculate the ATE for that bootstrap sample. We repeat this for 1000 bootstrap samples and obtain a 95\% confidence interval by taking the 2.5\% and 97.5\% points of the bootstrap distribution as the endpoints of the confidence interval. Across our 1000 random seeds of the synthetic DGPs, we measure the proportion of these confidence intervals that contain the true (RCT) ATE and report this metric as confidence interval coverage. See Appendix Section~\ref{appendix-sec:ci} for additional confidence interval plots.  
}

Table~\ref{table:sampling-results} shows that our proposed RCT rejection sampler results in a reduction of absolute bias compared to Algorithm 2 from ~\citet{gentzel2021and} by a factor of over 24 for Setting 1 (0.22/0.009) and a factor of 21 in Setting 3 (0.252/0.012). For Setting 3, \citeauthor{gentzel2021and}'s procedure results in almost a 100\% increase in bias relative to the gold RCT ATE of $-0.26$. In Setting 2 where $P(T=1)=0.5$, the differences in absolute bias between the two algorithms is less pronounced.\footnote{We note \citet{gentzel2021and} primarily focus on the setting for which $P(T=1)=P(T=0)=0.5$. However, their approach does not seem to generalize well outside of this setting, theoretically and empirically.}  
\tmlrChanges{In Settings 1 and 2, the confidence interval coverage for \citeauthor{gentzel2021and}'s procedure is 0 whereas our RCT rejection sampling algorithm results in coverage of 0.97 and 0.98 for Settings 1 and 2 respectively, both slightly above the nominal 0.95 coverage.}
The results of the simulation are consistent with our theoretical findings that our algorithm permits identifiability under more general settings than prior work.

\section{Finite Data Considerations and Proof of Concept}\label{sec:finite-data}

In the previous section, we provided theoretical guarantees for RCT rejection sampling and confirmed the algorithm results in low bias on synthetic data. In this section, we demonstrate how to put this proposed theory into practice and highlight considerations when working with finite real-world data. 
Our goal is to surface questions that must be asked and answered in creating useful and high-quality causal evaluation.

We also describe our specific approach towards each consideration as we create a proof of concept pipeline for empirical evaluation of high-dimensional backdoor adjustment methods. For this proof of concept, we use a large-scale, real-world RCT dataset with text as covariates.
Although our approaches are specific to our proof of concept dataset, we believe other evaluation designers will benefit from a real-world example of how to put the theory and considerations into practice. 

\subsection{Considerations prior to using a specific RCT dataset}

A necessary component for RCT subsampling is obtaining a real-world RCT dataset. This ensures a more realistic data generating processes compared to synthetic or semi-synthetic approaches (see Table~\ref{t:related-work}).
As \citet{gentzel2021and} note, there are many RCT repositories from a variety of disciplines from which evaluation designers could gather data. However, \citeauthor{gentzel2021and} find many of these existing datasets only have one or two covariates that satisfy $C \not \ci Y$ (see Consideration~\#1 below).

As we briefly mentioned in Section~\ref{sec:intro}, for settings with just a few covariates one can often use simple estimation strategies with theoretical guarantees---e.g., parametric models or contingency tables---and empirical evaluation may not be particularly informative in this setting.
Along these lines, we recommend that evaluation designers first ask themselves, \emph{Is empirical evaluation of causal estimators appropriate and necessary for this setting?} 
Not all settings are in need of empirical evaluation.

A potentially untapped resource for RCT rejection sampling data is A/B tests from large online platforms. Other work, e.g., \citet{eckles2021bias}, have used these types of experiments for constructed observational studies and we use such a dataset in our proof of concept. The large scale of these experiments can be advantageous since RCT subsampling reduces the number of units in the observational dataset by roughly half. Further, they often contain rich metadata and many covariates, which can be used to induce confounding in a way that emulates a high-dimensional setting.

\subsubsection{Proof of concept approach}\label{ssec:proof-concept-data}

For our proof of concept, we choose a setting for which empirical evaluation is appropriate and needed: high-dimensional backdoor adjustment. Our high-dimensional covariates are the thousands of vocabulary words from text data, an application area that has generated a large amount of interest from applied practitioners, see \citet{keith2020text,feder2022causal}. 

We use and publicly release a large, novel, real-world RCT (approximately 70k observations) that was run on an online scholarly search engine.\footnote{
\ifthenelse{\boolean{isAnonymized}}
{Upon camera ready, we will release the dataset and provide more details about the website but remove them for now because of anonymity reasons.} 
{The RCT was conducted on the Allen Institute for Artificial Intelligence's Semantic Scholar platform \url{https://www.semanticscholar.org/}.} 
Owners of the website conducted this experiment and gave us permission to use and release this data.
} 
Users arrive on a webpage that hosts metadata about a single academic paper and proceed to interact with this page. The RCT's randomized binary treatment is swapping the ordering of two buttons---a PDF reader and a new ``enhanced reader''. We set $T=1$ as the setting where the ``enhanced reader'' is displayed first. The  outcome of interest is a user clicking ($Y=1$) or not clicking ($Y=0$) on the enhanced reader button. The former action transports the user to a different webpage that provides a more interactive view of the publication. The RCT suggests that the treatment has a positive causal effect with an ATE of 0.113 computed using a simple difference of conditional means in the treated and untreated populations. See Appendix~\ref{sec:appendix-rct} for more details about the RCT.

\subsection{Consideration \#1: Checking necessary precondition $C \not\ci Y $}\label{ss:precondition} 

As we mentioned in Section~\ref{sec:sampling}, a necessary precondition for RCT subsampling in general is the existence of a causal edge between $C$ and $Y$, implying $C \not\ci Y $. 
The relationship between $C$ and $Y$ is naturally occurring (not modified by evaluation designers) and the amount of confounding induced by sampling is, in part, contingent on this relationship \citep{gentzel2021and}. 
One can empirically check $C \not\ci Y $ via independence tests, e.g.,~evaluating the odds ratio when both $C$ and $Y$ are binary variables. 
If there do not exist covariates, $C$ such that $C \not\ci Y $, one cannot move forward in the evaluation pipeline using RCT subsampling.  

\subsubsection{Proof of concept approach} 

For our proof of concept, we use a subpopulation strategy to ensure the precondition $C \not\ci Y$ is satisfied. 
We choose a single interpretable covariate to induce confounding: the field of study of manuscripts. 
Since $C \ci Y$ if and only if the odds ratio between $C$ and $Y$ is 1, we choose subpopulations of the full RCT that have high a odds ratio between a subset of the categorical field of study variable and the outcome $Y$. 
Specifically, we choose $C$ to be a binary covariate representing one of two fields of study; for \emph{Subpopulation A}, the field is either Physics or Medicine. In Appendix~\ref{sec-appendix:sub-b}, we implement the evaluation pipeline for an additional subpopulation with $C$ chosen as the articles with Engineering or Business as the field of study. Substantively, one can interpret this high odds ratio as natural differences in click rates from users viewing articles from different fields of study.  

We combine this subpopulation strategy with a proxy strategy in the next section to ensure that the estimation procedures only have access to high-dimensional covariates instead of our low-dimensional $C$. This has the benefit of simplifying  the process of specifying $P^*(T\mid C)$ while still ensuring that downstream modeling must still contend with high-dimensional adjustment.

\begin{figure}[t!]
  \centering
    \begin{minipage}{.3\textwidth}
    \begin{center}
            \scalebox{0.8}{
                \begin{tikzpicture}[>=stealth, node distance=2cm]
                    \tikzstyle{square} = [draw, thick, minimum size=1.0mm, inner sep=3pt]
                    \begin{scope}[xshift=9.25cm]
                        \path[->, very thick]
                        node[] (a) {$T$} 
                        node[right of=a] (y) {$Y$}
                        node[above right of=a, xshift=-0.6cm] (z) {$C$}
                        node[above right of=a, xshift=-0.6cm, yshift=1.2cm] (x) {$X$}
                        (c) edge[blue] (x)
                        (a) edge[blue] (y)
                        (c) edge[blue] (y)
                        (c) edge[red] (a)
                        ;
                    \end{scope}
                \end{tikzpicture}}
        \end{center}
  \end{minipage}
  \begin{minipage}{.65\textwidth}
    \centering
    \resizebox{0.98\linewidth}{!}{ 
    \begin{tabular}{llrrr}
      \toprule
      RCT Dataset & $C$ categories & n & RCT ATE & OR($C$, $Y$) \\
      \toprule
      Full & - & 69,675 & 0.113 & -  \\
      Subpopulation A & Physics, Medicine & 4,379  & 0.096 & 1.8  \\
      \bottomrule
  \end{tabular}
  }
  \end{minipage}
  \caption{Proof of concept approach. 
  \textbf{Left figure.} Causal DAG for the proxy strategy. The blue edges are confirmed to  empirically exist in the finite dataset. The red edge is selected by the evaluation designers via $P^*(T|C)$.
  \textbf{Right table.} RCT dataset descriptive statistics including the number of units in the population/subpopulation ($n$) and the odds ratio, $OR(C, Y)$.
  \label{fig:proofconcept-fig-table}
  }
\end{figure}

\subsection{Consideration \#2: Specification of $P^*(T|C)$}

Evaluation designers using RCT rejection sampling have one degree of freedom: specification of $P^{*}(T|C)$. We describe one specific approach in our proof of concept to choose $P^{*}(T|C)$, but we anticipate evaluation designers using RCT rejection sampling to create large empirical benchmarks may want to include many different parameterizations of $P^{*}(T|C)$ to evaluate empirical performance of methods under numerous settings. Consideration~\#3 describes approaches to diagnosing the choice of $P^{*}(T|C)$ for a specific finite dataset. 

\subsubsection{Proof of concept approach}\label{ss:proxy-tractable}

\textbf{Proxy strategy.} 
In Section~\ref{sec:sampling}, we briefly mentioned that specifying a suitable confounding function $P^*(T \mid C)$ may be difficult when $C$ is high-dimensional. 
A key property of our RCT is that it has high-dimensional text data, $X$, that is a proxy (with almost perfect predictive accuracy) for low-dimensional structured metadata---categories of scientific articles, e.g., Physics or Medicine. We use this structured metadata as the covariates $C$ in our RCT rejection sampler, but provide the causal estimation methods only $X, Y$ and $T$. Note that as evaluation designers, we still have access to $C$ to run diagnostics. This proxy strategy helps simplify the specification of $P^{*}(T \mid C)$ we use in the rejection sampler and avoids direct specification of a function involving high-dimensional covariates. Others have used similar proxy strategies for text in semi-synthetic evaluations, e.g.,~\citet{roberts2020adjusting,veitch2020adapting}. Such a technique may also be applied in other RCTs, e.g., healthcare studies where one or two important biomarkers  serve as low-dimensional confounding variables, and the larger electronic health record data serves as the proxy $X$.

\textbf{Using $X$.} For each document $i$, the high-dimensional covariates $X_i$ is a bag-of-words representation of the document's concatenated title and abstract given a 2000-unigram vocabulary. A new vocabulary is created for each RCT subpopulation; see Appendix~\ref{sec:appendix-vocab} for details. We check that there is high predictive accuracy of $P(C|X)$ to ensure the plausibility of causal estimation models only having access to $X$.\footnote{We leave to future work correcting for measurement error with noisy proxies $X$; see \citet{wood2018challenges}.} To measure this predictive accuracy, we model $P(C|X)$ with a logistic regression\footnote{
Using scikit-learn \citet{pedregosa2011scikit} and an elasticnet penalty, L1 ratio 0.1, class-weight balanced, and SAGA solver.
} classifier. Averaged across held-out test folds, the F1 score is 0.98 and the average precision is 0.99 for Subpopulation A (Physics, Medicine).

\textbf{Specifying $P^*(T|C)$.} Since $C$ is binary in our proof of concept pipeline, we choose a simple interpretable piece-wise function,
\begin{align} \label{eqn:our-p-star}
P^{*}(T_i=1|C_i) = \begin{cases}
                \zeta_0 \text{ if } C_i=0 \\
                \zeta_1 \text{ if } C_i=1
                \end{cases}
\end{align}
for each document $i$, where $0 <\zeta_0 <1$ and $0 < \zeta_1 < 1$ are parameters chosen by the evaluation designers. We choose $\zeta_0, \zeta_1$ for the remainder of the proof of concept pipeline via the diagnostics in Consideration \#3.

\subsection{Consideration \#3: Diagnostics for $P^*(T|C)$}\label{ss:confounding-function}

We recommend running diagnostics on the sampling procedure because, although Theorem~\ref{theorem:rejection} proves that our RCT rejection sampling permits identification, this is an asymptotic statement, and the sampler can produce finite samples where confounding bias is induced, but difficult to correct for.

\subsubsection{Proof of concept approach} 

\begin{figure}[t!]
    \centering
    \includegraphics[width=0.8\columnwidth]{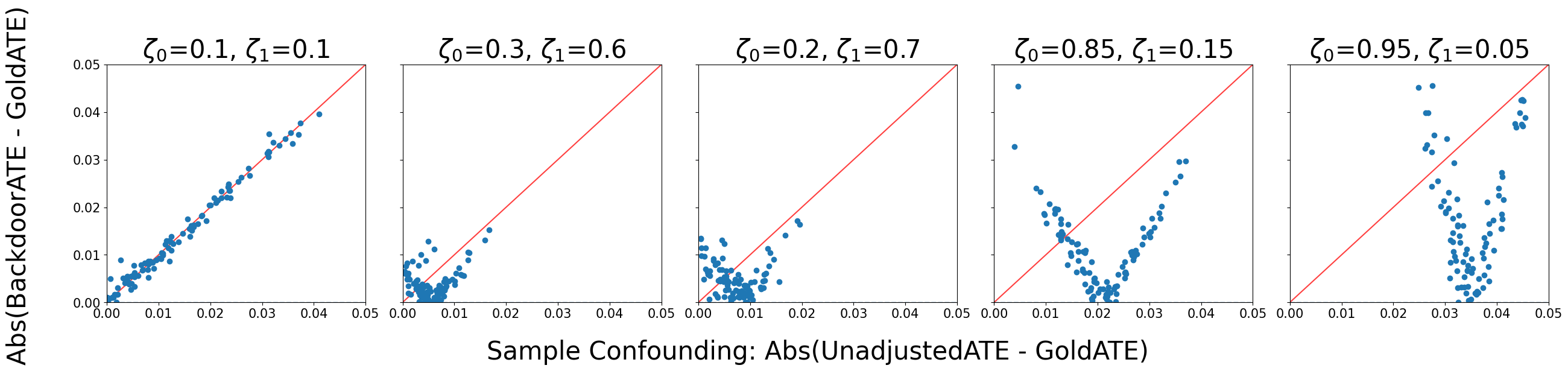} 
    \caption{\textbf{Diagnostic plots for proof of concept pipeline.} For Subpopulation A data, each plot is the parameterization of $P^{*}(T|C)$ in Equation~\ref{eqn:our-p-star}, which is specified by the evaluation designer. Each blue circle is a different random seed (100 seeds total per plot/parameterization).
    \label{fig:diagnostics-subpopa}
    }
\end{figure}

For our proof of concept pipeline, we step through the following diagnostics on the sampling procedure. Recall the notation from Section~\ref{sec:sampling}, $D_{\text{RCT}}$ is our RCT dataset (here, from Subpopulation A) and $D_{\text{OBS}}$ is the resulting observational dataset after RCT rejection sampling. 
First, we check empirically that overlap is satisfied for $C$ in $D_{\text{OBS}}$, i.e.,~$0 < \hat{P}(T=1 | C=c) < 1$ for all $c$.

Second, in Figure \ref{fig:diagnostics-subpopa} we compare the amount of confounding induced to the error in the oracle adjustment for different $\zeta_0, \zeta_1$ in Equation~\ref{eqn:our-p-star} across 100 random seeds (blue dots). On the y-axis, we plot the absolute difference between the ATE for $D_{\text{RCT}}$ (GoldATE) and exact backdoor estimates\footnote{The exact backdoor equation is tractable because $T, C$ and $Y$ are all binary.} obtained from using the oracle adjustment set $C$. On the x-axis, we plot the absolute difference between the ATE on $D_{\text{RCT}}$ (GoldATE) and the unadjusted naive estimate on $D_{\text{OBS}}$. In general, we want more samples to fall below the $y=x$ line (shown in red) since this means that more confounding was induced than there is error in estimation. Samples above the $y=x$ have more error from the sampling process than the amount of confounding induced, and thus are not useful in benchmarking methods that adjust for confounding. 
Of the settings in Figure~\ref{fig:diagnostics-subpopa}, $\zeta_0 = 0.85$ and $\zeta_1 = 0.15$ had the best proportion of sampled datasets lying below the red line, so we choose these parameters for our proof of concept pipeline. We leave to future work providing more guidance on choosing settings of $P^{*}(T|C)$ for a comprehensive benchmark.

\subsection{Consideration \#4: Modeling}
The primary goal of this work is to create clear steps to follow during the evaluation design phase. Although this stage precedes a thorough modeling effort, we recommend that one runs baseline models to check for potential issues.

\subsubsection{Proof of concept approach}
As a proof of concept, we apply baseline causal estimation models to the resulting $D_{\text{OBS}}$ datasets after RCT rejection sampling (with many random seeds); as we mention above. 
We implement\footnote{We attempted to use the \texttt{EconML} package \citet{econml}, but at of the time of our experiments it did not support using sparse matrices, which is required for our high-dimensional datasets.}   commonly-used causal estimation methods via two steps: (1) fitting base learners 
and (2) using causal estimators that combine the base learners via plug-in principles or second-stage regression. 
We took care to ensure we use the \emph{same} pre-trained base learners---same functional form and learned weights---as inputs into any appropriate causal estimator. 

\textbf{Base learners.} We implement base learners for:\footnote{Note for a binary outcome $Y$, we can rewrite the above equations with probabilities, as $\E[Y \mid \cdot]=P(Y=1 \mid \cdot)$.}
{\begin{small}
\begin{align}
    Q_{T_0}(x) &:= \E[Y|T=0, X = x] \label{eqn:base-q0}\\
    Q_{T_1}(x) &:= \E[Y|T=1, X=x] \label{eqn:base-q1}\\
    g(x) &:= P(T=1|X=x) \label{eqn:base-g} \\
    Q_{X}(x) &:= \E[Y|X=x] \label{eqn:base-qx}
\end{align} \end{small}}

\noindent In our application both $T$ and $Y$ are binary variables, so we use an ensemble of gradient boosted decision trees (catboost) \footnote{Using CatBoost \citep{Dorogush2018CatBoostGB} with default parameters and without cross-validation.} and logistic regression\footnote{
Using scikit-learn \citep{pedregosa2011scikit} and an elasticnet penalty, L1 ratio 0.1, balanced class weights, and SAGA solver. We tune the regularization parameter $C$ via cross-validation over the set  $C \in {1e^{-4}, 1e^{-3}, 1e^{-2}, 1e^{-1}, 1e^{0}, 1e^{1}}$.} for our base learners.
We fit our models using cross-fitting \citep{hansen2000sample,newey2018cross} and cross-validation; see Appendix~\ref{sec:appendix-modeling} for more details. 

\textbf{Causal estimators.} 
After training base learners with cross-fitting, we implement the following plug-in causal estimators: backdoor adjustment (outcome regression) (Q), inverse propensity of treatment weighting (IPTW), and augmented inverse propensity of treatment weighting (AIPTW) \citep{Robins1994EstimationOR}.  We also use DoubleML \citep{chernozhukov2018double} which applies an ordinary least squares on residuals from the base learners.
See Appendix~\ref{sec:appendix-modeling} for exact estimation equations.

\begin{table*}[t!]
  \centering
  \resizebox{0.95\linewidth}{!}{ 
      \begin{tabular}{lrrrrrrrr}
      \toprule
      & \multicolumn{2}{c}{$\hat{g}(x)$} 
      & \multicolumn{2}{c}{$\hat{Q}_{T_0}(x)$} 
      & \multicolumn{2}{c}{$\hat{Q}_{T_1}(x)$}
       & \multicolumn{2}{c}{$\hat{Q}_X(x)$}
      \\
       \textbf{Prediction Ave. Prec.} ($\uparrow$ better)
 & \multicolumn{1}{c}{train} 
 & \multicolumn{1}{c}{inference} 
 & \multicolumn{1}{c}{train} 
 & \multicolumn{1}{c}{inference} 
 & \multicolumn{1}{c}{train} 
 & \multicolumn{1}{c}{inference} 
 & \multicolumn{1}{c}{train} 
 & \multicolumn{1}{c}{inference} 
\\
      \toprule
      linear& 0.89 (0.07) \cellcolor {yellow!89.0}& 0.59 (0.02) \cellcolor {green!59.0}& 0.63 (0.32) \cellcolor {yellow!63.0}& 0.03 (0.01) \cellcolor {green!3.0}& 0.85 (0.17) \cellcolor {yellow!85.0}& 0.13 (0.03) \cellcolor {green!13.0}& 0.71 (0.2) \cellcolor {yellow!71.0}& 0.06 (0.01) \cellcolor {green!6.0}
      \\
      catboost (nonlinear)& 0.97 (0.0) \cellcolor {yellow!97.0}& 0.60 (0.02) \cellcolor {green!60.0}& 1.0 (0.0) \cellcolor {yellow!100}& 0.03 (0.01) \cellcolor {green!3.0}& 0.99 (0.01) \cellcolor {yellow!99.0}& 0.13 (0.02) \cellcolor {green!13.0}& 0.98 (0.01) \cellcolor {yellow!98.0}& 0.05 (0.01) \cellcolor {green!5.0} \\
      \bottomrule
  \end{tabular}
  }
 \resizebox{0.9\linewidth}{!}{ 
  \begin{tabular}{lrrrrrr}
      \toprule
      \textbf{Causal Rel.~Abs.~Error} ($\downarrow$ better)
      & Unadjusted (baseline) & Backdoor $C$ (oracle) & $\hat{\tau}_Q$ & $\hat{\tau}_{\text{IPTW}}$& $\hat{\tau}_{\text{AIPTW}}$& $\hat{\tau}_{\text{DML}}$ \\
      \toprule
      linear& 0.21 (0.08)& 0.12 (0.09)& 1.46 (1.04)\cellcolor {red!100}& 0.47 (0.16)\cellcolor {red!47.0}& 1.6 (0.66)\cellcolor {red!100}& 1.91 (0.9)\cellcolor {red!100}
      \\
      catboost (nonlinear)& 0.21 (0.08)& 0.12 (0.09)& 0.24 (0.1)\cellcolor {red!24.0}& 0.14 (0.11)\cellcolor {red!14.0}& 0.11 (0.1)\cellcolor {red!11.0}& 0.13 (0.1)\cellcolor {red!13.0} 
      \\
      \bottomrule
  \end{tabular}
  }
  \caption{Modeling results for subpopulation A. 
\textbf{Top:} Predictive models' average precision (ave.~prec.) for training (yellow) and inference (green) data splits. \textbf{Bottom:} Causal estimation models' relative absolute error (rel.~abs.~error) between the models' estimated ATE and the RCT ATE. Here, darker shades of red indicate worse causal estimates. Baselines, unadjusted conditional mean on the samples (unadjusted) and the backdoor adjustment with the oracle $C$ (backdoor C), are uncolored. 
We use two baselearner settings: linear and catboost (nonlinear). We report both the average and standard deviation (in parentheses) over 100 random seeds during sampling. 
All settings use $P^*(T|C)$ in \eqref{eqn:our-p-star} parameterized by $\zeta_0=0.85, \zeta_1=0.15$.
\label{t:big-results}}
\end{table*}

\textbf{Modeling results.} Table~\ref{t:big-results} shows results for Subpopulation A. Since both $T$ and $Y$ are binary, we report average precision (AP) for the base learners on both the training and inference folds; 
this metric is only calculated for observed (not counterfactual) observations. We also report the relative absolute error (RAE) between estimators on $D_{\text{OBS}}$ and the ATE for $D_{\text{RCT}}$. 
Comparing predictive base learners, the propensity score model, $g(x)$, has much higher AP on inference folds than models that involve the outcome. As we previously mentioned, RCT subsampling allows us to set the relationship between $X$ (via proxy for $C$) and $T$ but not $X$ and $Y$ so the low AP for outcome models could reflect the difficulty in estimating this ``natural'' relationship between $X$ and $Y$. See Section~\ref{sss:consider5-poc} for additional discussion on low average precision for the outcome models ($Q$). 

For causal estimators, we see that the doubly robust estimator AIPTW using catboost has the lowest estimation error---on par with estimates obtained using the oracle backdoor adjustment set $C$. It appears the doubly robust estimators using linear models do not recover from the poor predictive performance of the outcome models and IPTW is better in this setting. 
Though seemingly counterintuitive that the linear and catboost models have similar  predictive performance but large differences in the causal estimation error, this discrepancy between predictive performance and causal estimation error is consistent with theoretical results on fitting nuisance models in causal inference \citep{tsiatis2006semiparametric, shortreed2017outcome}
and empirical results from semi-synthetic evaluation \citep{shi2019adapting,wood2021generating}.
Although we used best practices from machine learning to fit our base learners, these results suggest future work is needed to adapt machine learning practices to the goals of causal estimation.

\subsection{Consideration \#5: Additional finite data challenges}\label{ss:consider6}

The broader purpose of this line of empirical evaluation is to understand the real-world settings for which certain estimation approaches are successful or unsuccessful. We believe bridging the gap between theory that holds asymptotically and finite data is important for drawing valid causal inference, but evaluation designers might encounter unforeseen challenges particular to finite data that need to be examined carefully.

\subsubsection{Proof of concept approach} \label{sss:consider5-poc}

In our proof of concept pipeline, we hypothesize the outcome models have very low average precision (Table~\ref{t:big-results}) because of finite data issues with class imbalance. 
In particular, for Subpopulation A, 82\% of our data is $C=1$ and $\E[Y]=0.07$ so there are few examples to learn from in the smallest category ($C=0$, $Y=1$): only 34 documents. This shows that even with our RCT that has a relatively large size (roughly 4k units) compared to other real-world RCTs, there are challenges with having sufficient support. 

\section{Discussion and Future Work}\label{sec:conclusion}
Unlike predictive evaluation, empirical evaluation for causal estimation is challenging and still at a nascent stage. In this work, we argue that one of the most promising paths forward is to use a RCT subsampling strategy, and to this line of work, we contribute an RCT rejection sampler with theoretical guarantees. We showed the utility of this algorithm in a proof of concept pipeline with a novel, real-world RCT to empirically evaluate high-dimensional backdoor adjustment methods.

Of course, there are critics of emprical evaluation. \citet{hernan2019comment} pejoratively compared a competition for causal estimation to evaluating ``spherical cows in a vacuum'' and claimed this discounted necessary subject-matter expertise. Even in the machine learning community, researchers warn against ``mindless bake-offs'' of methods \citep{langley2011changing}, and in some cases the creation of benchmarks has led to the community overfitting to benchmarks, e.g.,~\citet{recht2019imagenet}. However, in the absence of theory or when theoretical assumptions do not match reality, we see empirical evaluation as a necessary, but not exclusive, part of the broader field of causal inference.

A fruitful future direction is for evaluation designers to use our RCT rejection sampler to create comprehensive benchmarks for various phenomena of interest: not only high-dimensional confounding but also heterogeneous treatment effects, unmeasured confounding, missing data, etc. This would involve gathering more RCTs and establishing interesting ways to set $P^{*}(T|C)$. Our proof of concept evaluation pipeline demonstrated the utility of RCT subsampling but there were many avenues we chose not to pursue such as: 
measurement error, causal null hypothesis tests, or moving to more sophisticated natural language processing approaches beyond bag-of-words, e.g.,~the CausalBERT model \citep{veitch2020adapting}.

In another direction, applied practitioners need guidance on which causal estimation method to use given their specific observational data. 
Although other work has attempted to link observational data to experimental data (real or synthetic) in which the ground-truth is known \citep{neal2020realcause,kallus2018removing}, we believe RCT subsampling could help with meta analyses of which combination of techniques work best under which settings. Overall, we see this work as contributing to a much larger research agenda on empirical evaluation for causal estimation.

\subsubsection*{Broader Impact Statement}
We conducted this research with ethical due diligence. Our real-world RCT dataset was implemented by owners of the online platform and in full compliance with the platform’s user agreement. The platform owners gave us explicit permission to use and access this dataset. Our dataset contains paper titles and abstracts, which are already publicly available from many sources, and we have removed any potentially personally identifiable information from the dataset, e.g., author names, user ids, user IP addresses, or session ids. By releasing this data, we do not anticipate any harm to authors or users. 

Like any technological innovation, our proposed RCT rejection sampling algorithm and evaluation pipeline have the potential for dual use—to both benefit or harm society depending on the actions of the humans using the technology. We anticipate there could be substantial societal benefit from more accurate estimation of causal effects of treatments in the medical or public policy spheres. However, other applications of causal inference could potentially harm society by controlling or manipulating individuals. Despite these tradeoffs in downstream applications, we feel strongly this paper's contributions will result in net overall benefit to the research community and society at large.

\ifthenelse{\boolean{isAnonymized}}
{} 
{\subsubsection*{Author Contributions}
KK conceived the original idea of the project and managed the project. 
RB contributed the ideas behind Algorithm 1 as well as the proofs in Section 3 and the Appendix. RB and KK implemented the synthetic experiments in Section 3. KK gathered and cleaned the data for the proof of concept pipeline in Section 4. KK and SF implemented the proof of concept empirical pipeline in Section 4. KK and RB wrote the first draft of the manuscript. KK, SF, DJ, JB, and RB guided the research ideas and experiments and edited the manuscript. 
} 

\ifthenelse{\boolean{isAnonymized}}
{} 
{\subsubsection*{Acknowledgments}
The authors gratefully thank David Jensen, Amanda Gentzel, Purva Pruthi, Doug Downey, Brandon Stewart, Zach Wood-Doughty and Jacob Eisenstein for comments on earlier drafts of this manuscript. The authors also thank anonymous reviewers from ICML and TMLR for helpful comments. Special thanks to the Semantic Scholar team at the Allen Institute for Artificial Intelligence for help gathering the real-world RCT dataset. 
} 

\bibliography{bib}

\begin{thebibliography}{64}
\providecommand{\natexlab}[1]{#1}
\providecommand{\url}[1]{\texttt{#1}}
\expandafter\ifx\csname urlstyle\endcsname\relax
  \providecommand{\doi}[1]{doi: #1}\else
  \providecommand{\doi}{doi: \begingroup \urlstyle{rm}\Url}\fi

\bibitem[Arceneaux et~al.(2006)Arceneaux, Gerber, and
  Green]{arceneaux2006comparing}
Kevin Arceneaux, Alan~S Gerber, and Donald~P Green.
\newblock Comparing experimental and matching methods using a large-scale voter
  mobilization experiment.
\newblock \emph{Political Analysis}, 14\penalty0 (1):\penalty0 37--62, 2006.

\bibitem[Athey et~al.(2018)Athey, Imbens, and Wager]{athey2018approximate}
Susan Athey, Guido~W Imbens, and Stefan Wager.
\newblock Approximate residual balancing: debiased inference of average
  treatment effects in high dimensions.
\newblock \emph{Journal of the Royal Statistical Society: Series B (Statistical
  Methodology)}, 80\penalty0 (4):\penalty0 597--623, 2018.

\bibitem[Bareinboim \& Tian(2015)Bareinboim and Tian]{bareinboim2015recovering}
Elias Bareinboim and Jin Tian.
\newblock Recovering causal effects from selection bias.
\newblock In \emph{Proceedings of the AAAI Conference on Artificial
  Intelligence}, volume~29, 2015.

\bibitem[Belloni et~al.(2014)Belloni, Chernozhukov, and
  Hansen]{belloni2014inference}
Alexandre Belloni, Victor Chernozhukov, and Christian Hansen.
\newblock Inference on treatment effects after selection among high-dimensional
  controls.
\newblock \emph{The Review of Economic Studies}, 81\penalty0 (2):\penalty0
  608--650, 2014.

\bibitem[Bhattacharya \& Nabi(2022)Bhattacharya and
  Nabi]{bhattacharya2022testability}
Rohit Bhattacharya and Razieh Nabi.
\newblock On testability of the front-door model via {Verma} constraints.
\newblock In \emph{The 38th Conference on Uncertainty in Artificial
  Intelligence}, 2022.

\bibitem[Bhattacharya et~al.(2022)Bhattacharya, Nabi, and
  Shpitser]{bhattacharya2022semiparametric}
Rohit Bhattacharya, Razieh Nabi, and Ilya Shpitser.
\newblock Semiparametric inference for causal effects in graphical models with
  hidden variables.
\newblock \emph{Journal of Machine Learning Research}, 23:\penalty0 1--76,
  2022.

\bibitem[Chernozhukov et~al.(2018)Chernozhukov, Chetverikov, Demirer, Duflo,
  Hansen, Newey, and Robins]{chernozhukov2018double}
Victor Chernozhukov, Denis Chetverikov, Mert Demirer, Esther Duflo, Christian
  Hansen, Whitney Newey, and James~M Robins.
\newblock Double/debiased machine learning for treatment and structural
  parameters.
\newblock \emph{The Econometrics Journal}, 21\penalty0 (1):\penalty0 C1--C68,
  2018.

\bibitem[Dahabreh et~al.(2022)Dahabreh, Steingrimsson, Robins, and
  Hern{\'a}n]{dahabreh2022randomized}
Issa~J Dahabreh, Jon~A Steingrimsson, James~M Robins, and Miguel~A Hern{\'a}n.
\newblock Randomized trials and their observational emulations: A framework for
  benchmarking and joint analysis.
\newblock \emph{arXiv preprint arXiv:2203.14857}, 2022.

\bibitem[D'Amour \& Franks(2021)D'Amour and Franks]{d2021deconfounding}
Alexander D'Amour and Alexander Franks.
\newblock Deconfounding scores: Feature representations for causal effect
  estimation with weak overlap.
\newblock \emph{arXiv preprint arXiv:2104.05762}, 2021.

\bibitem[Dorie et~al.(2019)Dorie, Hill, Shalit, Scott, and
  Cervone]{dorie2019automated}
Vincent Dorie, Jennifer Hill, Uri Shalit, Marc Scott, and Dan Cervone.
\newblock Automated versus do-it-yourself methods for causal inference: Lessons
  learned from a data analysis competition.
\newblock \emph{Statistical Science}, 34\penalty0 (1):\penalty0 43--68, 2019.

\bibitem[Dorogush et~al.(2018)Dorogush, Ershov, and
  Gulin]{Dorogush2018CatBoostGB}
Anna~Veronika Dorogush, Vasily Ershov, and Andrey Gulin.
\newblock Cat{B}oost: gradient boosting with categorical features support.
\newblock \emph{arXiv preprint arXiv:1810.11363}, 2018.

\bibitem[Eckles \& Bakshy(2021)Eckles and Bakshy]{eckles2021bias}
Dean Eckles and Eytan Bakshy.
\newblock Bias and high-dimensional adjustment in observational studies of peer
  effects.
\newblock \emph{Journal of the American Statistical Association}, 116\penalty0
  (534):\penalty0 507--517, 2021.

\bibitem[Farrell et~al.(2021)Farrell, Liang, and Misra]{farrell2021deep}
Max~H Farrell, Tengyuan Liang, and Sanjog Misra.
\newblock Deep neural networks for estimation and inference.
\newblock \emph{Econometrica}, 89\penalty0 (1):\penalty0 181--213, 2021.

\bibitem[Feder et~al.(2022)Feder, Keith, Manzoor, Pryzant, Sridhar,
  Wood-Doughty, Eisenstein, Grimmer, Reichart, Roberts,
  et~al.]{feder2022causal}
Amir Feder, Katherine~A Keith, Emaad Manzoor, Reid Pryzant, Dhanya Sridhar,
  Zach Wood-Doughty, Jacob Eisenstein, Justin Grimmer, Roi Reichart, Margaret~E
  Roberts, et~al.
\newblock Causal inference in natural language processing: Estimation,
  prediction, interpretation and beyond.
\newblock \emph{Transactions of the Association for Computational Linguistics},
  10:\penalty0 1138--1158, 2022.

\bibitem[Gentzel et~al.(2019)Gentzel, Garant, and Jensen]{gentzel2019case}
Amanda~M Gentzel, Dan Garant, and David Jensen.
\newblock The case for evaluating causal models using interventional measures
  and empirical data.
\newblock \emph{Advances in Neural Information Processing Systems}, 32, 2019.

\bibitem[Gentzel et~al.(2021)Gentzel, Pruthi, and Jensen]{gentzel2021and}
Amanda~M Gentzel, Purva Pruthi, and David Jensen.
\newblock How and why to use experimental data to evaluate methods for
  observational causal inference.
\newblock In \emph{International Conference on Machine Learning}, pp.\
  3660--3671. PMLR, 2021.

\bibitem[Gordon et~al.(2019)Gordon, Zettelmeyer, Bhargava, and
  Chapsky]{gordon2019comparison}
Brett~R Gordon, Florian Zettelmeyer, Neha Bhargava, and Dan Chapsky.
\newblock A comparison of approaches to advertising measurement: Evidence from
  big field experiments at {Facebook}.
\newblock \emph{Marketing Science}, 38\penalty0 (2):\penalty0 193--225, 2019.

\bibitem[Gordon et~al.(2022)Gordon, Moakler, and Zettelmeyer]{gordon2022close}
Brett~R Gordon, Robert Moakler, and Florian Zettelmeyer.
\newblock Close enough? {A} large-scale exploration of non-experimental
  approaches to advertising measurement.
\newblock \emph{arXiv preprint arXiv:2201.07055}, 2022.

\bibitem[Hansen(2000)]{hansen2000sample}
Bruce~E Hansen.
\newblock Sample splitting and threshold estimation.
\newblock \emph{Econometrica}, 68\penalty0 (3):\penalty0 575--603, 2000.

\bibitem[Hern{\'a}n(2019)]{hernan2019comment}
Miguel~A Hern{\'a}n.
\newblock Comment: Spherical cows in a vacuum: data analysis competitions for
  causal inference.
\newblock \emph{Statistical Science}, 34\penalty0 (1):\penalty0 69--71, 2019.

\bibitem[Hill(2011)]{hill2011bayesian}
Jennifer~L Hill.
\newblock Bayesian nonparametric modeling for causal inference.
\newblock \emph{Journal of Computational and Graphical Statistics}, 20\penalty0
  (1):\penalty0 217--240, 2011.

\bibitem[Hill et~al.(2004)Hill, Reiter, and Zanutto]{hill2004comparison}
Jennifer~L Hill, Jerome~P Reiter, and Elaine~L Zanutto.
\newblock A comparison of experimental and observational data analyses.
\newblock \emph{Applied Bayesian Modeling and Causal Inference from
  Incomplete-Data Perspectives: An Essential Journey with Donald Rubin's
  Statistical Family}, pp.\  49--60, 2004.

\bibitem[Holland(1986)]{holland1986statistics}
Paul~W Holland.
\newblock Statistics and causal inference.
\newblock \emph{Journal of the American Statistical Association}, 81\penalty0
  (396):\penalty0 945--960, 1986.

\bibitem[Jaciw(2016)]{jaciw2016assessing}
Andrew~P Jaciw.
\newblock Assessing the accuracy of generalized inferences from comparison
  group studies using a within-study comparison approach: The methodology.
\newblock \emph{Evaluation Review}, 40\penalty0 (3):\penalty0 199--240, 2016.

\bibitem[Johansson et~al.(2016)Johansson, Shalit, and
  Sontag]{johansson2016learning}
Fredrik Johansson, Uri Shalit, and David Sontag.
\newblock Learning representations for counterfactual inference.
\newblock In \emph{International Conference on Machine Learning}, pp.\
  3020--3029. ICML, 2016.

\bibitem[Kallus et~al.(2018)Kallus, Puli, and Shalit]{kallus2018removing}
Nathan Kallus, Aahlad~Manas Puli, and Uri Shalit.
\newblock Removing hidden confounding by experimental grounding.
\newblock \emph{Advances in Neural Information Processing Systems}, 31, 2018.

\bibitem[Keith et~al.(2020)Keith, Jensen, and O’Connor]{keith2020text}
Katherine Keith, David Jensen, and Brendan O’Connor.
\newblock Text and causal inference: A review of using text to remove
  confounding from causal estimates.
\newblock In \emph{Proceedings of the 58th Annual Meeting of the Association
  for Computational Linguistics}, 2020.

\bibitem[K{\"u}nzel et~al.(2019)K{\"u}nzel, Sekhon, Bickel, and
  Yu]{kunzel2019metalearners}
S{\"o}ren~R K{\"u}nzel, Jasjeet~S Sekhon, Peter~J Bickel, and Bin Yu.
\newblock Metalearners for estimating heterogeneous treatment effects using
  machine learning.
\newblock \emph{Proceedings of the National Academy of Sciences}, 116\penalty0
  (10):\penalty0 4156--4165, 2019.

\bibitem[Kuroki \& Pearl(2014)Kuroki and Pearl]{kuroki2014measurement}
Manabu Kuroki and Judea Pearl.
\newblock Measurement bias and effect restoration in causal inference.
\newblock \emph{Biometrika}, 101\penalty0 (2):\penalty0 423--437, 2014.

\bibitem[LaLonde(1986)]{lalonde1986evaluating}
Robert~J LaLonde.
\newblock Evaluating the econometric evaluations of training programs with
  experimental data.
\newblock \emph{The American Economic Review}, pp.\  604--620, 1986.

\bibitem[Langley(2011)]{langley2011changing}
Pat Langley.
\newblock The changing science of machine learning.
\newblock \emph{Machine Learning}, 82\penalty0 (3):\penalty0 275--279, 2011.

\bibitem[Li et~al.(2016)Li, Vlassis, Kawale, and Fu]{li2016matching}
Sheng Li, Nikos Vlassis, Jaya Kawale, and Yun Fu.
\newblock Matching via dimensionality reduction for estimation of treatment
  effects in digital marketing campaigns.
\newblock In \emph{Proceedings of the Twenty-Fifth International Joint
  Conference on Artificial Intelligence}, pp.\  3768--3774, 2016.

\bibitem[Maathuis et~al.(2009)Maathuis, Kalisch, and
  B{\"u}hlmann]{maathuis2009estimating}
Marloes~H Maathuis, Markus Kalisch, and Peter B{\"u}hlmann.
\newblock Estimating high-dimensional intervention effects from observational
  data.
\newblock \emph{The Annals of Statistics}, 37\penalty0 (6A):\penalty0
  3133--3164, 2009.

\bibitem[{Microsoft Research}(2019)]{econml}
{Microsoft Research}.
\newblock {EconML}: A {P}ython package for {ML}-based heterogeneous treatment
  effects estimation, 2019.
\newblock URL \url{https://github.com/microsoft/EconML}.

\bibitem[Murphy(2012)]{murphy2012machine}
Kevin~P Murphy.
\newblock \emph{Machine Learning: A Probabilistic Perspective}.
\newblock MIT press, 2012.

\bibitem[Neal et~al.(2020)Neal, Huang, and Raghupathi]{neal2020realcause}
Brady Neal, Chin-Wei Huang, and Sunand Raghupathi.
\newblock {RealCause}: {R}ealistic causal inference benchmarking.
\newblock \emph{arXiv preprint arXiv:2011.15007}, 2020.

\bibitem[Newey \& Robins(2018)Newey and Robins]{newey2018cross}
Whitney~K Newey and James~M Robins.
\newblock Cross-fitting and fast remainder rates for semiparametric estimation.
\newblock \emph{arXiv preprint arXiv:1801.09138}, 2018.

\bibitem[Parikh et~al.(2022)Parikh, Varjao, Xu, and
  Tchetgen~Tchetgen]{parikh2022validating}
Harsh Parikh, Carlos Varjao, Louise Xu, and Eric~J Tchetgen~Tchetgen.
\newblock Validating causal inference methods.
\newblock In \emph{International Conference on Machine Learning}, pp.\
  17346--17358. PMLR, 2022.

\bibitem[Pearl(1995)]{pearl1995causal}
Judea Pearl.
\newblock Causal diagrams for empirical research.
\newblock \emph{Biometrika}, 82\penalty0 (4):\penalty0 669--688, 1995.

\bibitem[Pearl(2009)]{pearl2009causality}
Judea Pearl.
\newblock \emph{Causality}.
\newblock Cambridge University Press, 2009.

\bibitem[Pedregosa et~al.(2011)Pedregosa, Varoquaux, Gramfort, Michel, Thirion,
  Grisel, Blondel, Prettenhofer, Weiss, Dubourg, et~al.]{pedregosa2011scikit}
Fabian Pedregosa, Ga{\"e}l Varoquaux, Alexandre Gramfort, Vincent Michel,
  Bertrand Thirion, Olivier Grisel, Mathieu Blondel, Peter Prettenhofer, Ron
  Weiss, Vincent Dubourg, et~al.
\newblock {scikit-learn}: Machine learning in {P}ython.
\newblock \emph{Journal of Machine Learning Research}, 12:\penalty0 2825--2830,
  2011.

\bibitem[Recht et~al.(2019)Recht, Roelofs, Schmidt, and
  Shankar]{recht2019imagenet}
Benjamin Recht, Rebecca Roelofs, Ludwig Schmidt, and Vaishaal Shankar.
\newblock Do imagenet classifiers generalize to imagenet?
\newblock In \emph{International Conference on Machine Learning}, pp.\
  5389--5400. PMLR, 2019.

\bibitem[Reisach et~al.(2021)Reisach, Seiler, and Weichwald]{reisach2021beware}
Alexander Reisach, Christof Seiler, and Sebastian Weichwald.
\newblock Beware of the simulated {DAG}! causal discovery benchmarks may be
  easy to game.
\newblock \emph{Advances in Neural Information Processing Systems},
  34:\penalty0 27772--27784, 2021.

\bibitem[Richardson \& Robins(2013)Richardson and Robins]{richardson2013single}
Thomas~S Richardson and James~M Robins.
\newblock Single world intervention graphs {(SWIGs)}: A unification of the
  counterfactual and graphical approaches to causality.
\newblock \emph{Center for the Statistics and the Social Sciences, University
  of Washington Working Paper 128}, pp.\  1--146, 2013.

\bibitem[Roberts et~al.(2020)Roberts, Stewart, and
  Nielsen]{roberts2020adjusting}
Margaret~E Roberts, Brandon~M Stewart, and Richard~A Nielsen.
\newblock Adjusting for confounding with text matching.
\newblock \emph{American Journal of Political Science}, 64\penalty0
  (4):\penalty0 887--903, 2020.

\bibitem[Robins(1986)]{robins1986new}
James~M Robins.
\newblock A new approach to causal inference in mortality studies with a
  sustained exposure period—application to control of the healthy worker
  survivor effect.
\newblock \emph{Mathematical Modelling}, 7\penalty0 (9-12):\penalty0
  1393--1512, 1986.

\bibitem[Robins et~al.(1994)Robins, Rotnitzky, and
  Zhao]{Robins1994EstimationOR}
James~M Robins, Andrea Rotnitzky, and Lue~Ping Zhao.
\newblock Estimation of regression coefficients when some regressors are not
  always observed.
\newblock \emph{Journal of the American Statistical Association}, 89:\penalty0
  846--866, 1994.

\bibitem[Rotnitzky \& Smucler(2020)Rotnitzky and
  Smucler]{rotnitzky2020efficient}
Andrea Rotnitzky and Ezequiel Smucler.
\newblock Efficient adjustment sets for population average causal treatment
  effect estimation in graphical models.
\newblock \emph{Journal of Machine Learning Research}, 21:\penalty0 188--1,
  2020.

\bibitem[Schmidt et~al.(2022)Schmidt, Cameron, Lowman, Brul{\'e}, Deshpande,
  Fatemi, Barash, Greenberg, Costello, Sherman, et~al.]{schmidt2022searching}
Aurora~C Schmidt, Christopher~J Cameron, Corey Lowman, Joshua Brul{\'e},
  Amruta~J Deshpande, Seyyed~A Fatemi, Vladimir Barash, Ariel~M Greenberg,
  Cash~J Costello, Eli~S Sherman, et~al.
\newblock Searching for explanations: Testing social scientific methods in
  synthetic ground-truthed worlds.
\newblock \emph{Computational and Mathematical Organization Theory}, pp.\
  1--32, 2022.

\bibitem[Shadish et~al.(2008)Shadish, Clark, and Steiner]{shadish2008can}
William~R Shadish, Margaret~H Clark, and Peter~M Steiner.
\newblock Can nonrandomized experiments yield accurate answers? {A} randomized
  experiment comparing random and nonrandom assignments.
\newblock \emph{Journal of the American Statistical Association}, 103\penalty0
  (484):\penalty0 1334--1344, 2008.

\bibitem[Shi et~al.(2019)Shi, Blei, and Veitch]{shi2019adapting}
Claudia Shi, David Blei, and Victor Veitch.
\newblock Adapting neural networks for the estimation of treatment effects.
\newblock \emph{Advances in Neural Information Processing Systems}, 32, 2019.

\bibitem[Shimoni et~al.(2018)Shimoni, Yanover, Karavani, and
  Goldschmnidt]{shimoni2018benchmarking}
Yishai Shimoni, Chen Yanover, Ehud Karavani, and Yaara Goldschmnidt.
\newblock Benchmarking framework for performance-evaluation of causal inference
  analysis.
\newblock \emph{arXiv preprint arXiv:1802.05046}, 2018.

\bibitem[Shortreed \& Ertefaie(2017)Shortreed and
  Ertefaie]{shortreed2017outcome}
Susan~M Shortreed and Ashkan Ertefaie.
\newblock Outcome-adaptive lasso: variable selection for causal inference.
\newblock \emph{Biometrics}, 73\penalty0 (4):\penalty0 1111--1122, 2017.

\bibitem[Spirtes et~al.(2000)Spirtes, Glymour, and
  Scheines]{spirtes2000causation}
Peter~L Spirtes, Clark~N Glymour, and Richard Scheines.
\newblock \emph{Causation, Prediction, and Search}.
\newblock MIT press, 2000.

\bibitem[Stekhoven et~al.(2012)Stekhoven, Moraes, Sveinbj{\"o}rnsson, Hennig,
  Maathuis, and B{\"u}hlmann]{stekhoven2012causal}
Daniel~J Stekhoven, Izabel Moraes, Gardar Sveinbj{\"o}rnsson, Lars Hennig,
  Marloes~H Maathuis, and Peter B{\"u}hlmann.
\newblock Causal stability ranking.
\newblock \emph{Bioinformatics}, 28\penalty0 (21):\penalty0 2819--2823, 2012.

\bibitem[Thams et~al.(2023)Thams, Saengkyongam, Pfister, and
  Peters]{thams2023statistical}
Nikolaj Thams, Sorawit Saengkyongam, Niklas Pfister, and Jonas Peters.
\newblock Statistical testing under distributional shifts.
\newblock \emph{Journal of the Royal Statistical Society Series B: Statistical
  Methodology}, 85\penalty0 (3):\penalty0 597--663, 2023.

\bibitem[Tsiatis(2007)]{tsiatis2006semiparametric}
Anastasios Tsiatis.
\newblock \emph{Semiparametric Theory and Missing Data}.
\newblock Springer Science \& Business Media, 2007.

\bibitem[Veitch et~al.(2020)Veitch, Sridhar, and Blei]{veitch2020adapting}
Victor Veitch, Dhanya Sridhar, and David Blei.
\newblock Adapting text embeddings for causal inference.
\newblock In \emph{Conference on Uncertainty in Artificial Intelligence}, pp.\
  919--928. PMLR, 2020.

\bibitem[Wasserman(2004)]{wasserman2004all}
Larry Wasserman.
\newblock \emph{All of statistics: a concise course in statistical inference},
  volume~26.
\newblock Springer, 2004.

\bibitem[Weld et~al.(2022)Weld, West, Glenski, Arbour, Rossi, and
  Althoff]{weld2022adjusting}
Galen Weld, Peter West, Maria Glenski, David Arbour, Ryan~A Rossi, and Tim
  Althoff.
\newblock Adjusting for confounders with text: Challenges and an empirical
  evaluation framework for causal inference.
\newblock In \emph{Proceedings of the International AAAI Conference on Web and
  Social Media}, volume~16, pp.\  1109--1120, 2022.

\bibitem[Wood-Doughty et~al.(2018)Wood-Doughty, Shpitser, and
  Dredze]{wood2018challenges}
Zach Wood-Doughty, Ilya Shpitser, and Mark Dredze.
\newblock Challenges of using text classifiers for causal inference.
\newblock In \emph{Proceedings of the Conference on Empirical Methods in
  Natural Language Processing. Conference on Empirical Methods in Natural
  Language Processing}, volume 2018, pp.\  4586. NIH Public Access, 2018.

\bibitem[Wood-Doughty et~al.(2021)Wood-Doughty, Shpitser, and
  Dredze]{wood2021generating}
Zach Wood-Doughty, Ilya Shpitser, and Mark Dredze.
\newblock Generating synthetic text data to evaluate causal inference methods.
\newblock \emph{arXiv preprint arXiv:2102.05638}, 2021.

\bibitem[Zeng et~al.(2022)Zeng, Gensheimer, Rubin, Athey, and
  Shachter]{zeng2022uncovering}
Jiaming Zeng, Michael~F Gensheimer, Daniel~L Rubin, Susan Athey, and Ross~D
  Shachter.
\newblock Uncovering interpretable potential confounders in electronic medical
  records.
\newblock \emph{Nature Communications}, 13\penalty0 (1):\penalty0 1--14, 2022.

\bibitem[Zhang \& Bareinboim(2021)Zhang and Bareinboim]{zhang2021bounding}
Junzhe Zhang and Elias Bareinboim.
\newblock Bounding causal effects on continuous outcome.
\newblock In \emph{Proceedings of the AAAI Conference on Artificial
  Intelligence}, volume~35, pp.\  12207--12215, 2021.

\end{thebibliography}
\bibliographystyle{tmlr}

\clearpage

\appendix
\section{Proof of Proposition~\ref{proposition-gentzel}} \label{sec:appendix-proof-gentzel}

\begin{proof} 
Since the selection variable $S$ is created using a structural equation dependent on $T$ and $C$ and samples are retained only when $S=1$, the distribution of the selected samples  $P^*(C, T, Y) = P(C, T, Y \mid S=1)$ and is Markov relative to Fig.~\ref{fig:selection-bias}(b). In the absence of additional constraints on $P^*$ (e.g., linearity assumptions) or the relation between $P$ and $P^*$, it is known that the ATE is non-parametrically identified in the presence of selection bias if and only if no variable that is a causal ancestor of the outcome in a graph where one deletes the treatment variable is also a causal ancestor of the selection variable \citep[Theorem 2]{bareinboim2015recovering}. In this case,  if we delete $T$ from Fig.~\ref{fig:selection-bias}(b), $C$ remains a causal ancestor of $Y$ and $S$. Hence, the effect is not identified via any non-parametric functional of the observed data and condition (II) cannot be satisfied.
\end{proof}

\section{Viewing Algorithm~\ref{alg:R2C} as drawing from a conditionally ignorable causal model}\label{sec:appendix-markov}

A causal model of a DAG $\G(V)$ can be interpreted as (i) a set of statistical distributions $P(V)$ that factorize according to $\G$: $P(V)=\prod_{V_i\in V}P(V_i\mid \Pa_i)$; and (ii) a set of post intervention distributions given by the g-formula aka truncated factorization \citep{robins1986new, spirtes2000causation, pearl2009causality}: for every $A\subset V$, we have $P(V\setminus A \mid \doo(A=a)) = \prod_{V_i \in V\setminus A} P(V_i \mid \Pa_i)\  \vert_{A=a}$.

If it were possible to recollect data through a conditionally randomized experiment where treatment is assigned with probability $P^*(T\mid C)$ instead of $P(T)$, the observed data distribution over $C, T, Y$ would factorize according to the standard conditionally ignorable model shown in Fig.~\ref{fig:selection-bias}(c). That is, the distribution factorizes as $P^*(C, T, Y) = P(C) \times P^*(T\mid C) \times P(Y\mid T, C)$ and implies no independence constraints on the observed data. The distribution of samples $P^*(C, T, Y)$ output by the rejection sampler in Algorithm~\ref{alg:R2C} also implies no independence constraints on $C, T, Y$ and thus has the exact same factorization. This establishes statistical equivalence of the conditionally ignorable model and the one obtained via our rejection sampler.

However, statistical equivalence alone is insufficient, as it is statistically equivalent to any complete DAG on the variables $C, T, Y$ that imply no independence constraints on the observed data. However, it is easy to see that all post-intervention distributions obtained via truncated factorization in the conditionally ignorable model also have the same identifying functionals in the model obtained through the rejection sampler due to the extra constraints imposed in Algorithm~\ref{alg:R2C}. For example, $P^*(T, Y \mid \doo(C=c)) = P^*(T\mid C=c)\times P(Y\mid T, C)$ and $P^*(C, Y \mid \doo(T=t)) = P(C)\times P(Y\mid T=t, C)$ in both the conditionally ignorable model and the one obtained via rejection sampling. Since the distribution obtained from the rejection sampler is indistinguishable from a conditionally ignorable model both in terms of the statistical model and all post-intervention distributions,  applying Algorithm~\ref{alg:R2C} can be viewed as essentially drawing samples from a standard conditionally ignorable model. That is, despite the selection mechanism, the resulting distribution is indistinguishable from Figure~\ref{fig:selection-bias}(c), the regular causal DAG model we use to represent observed confounding.

\section{Synthetic DGPs to evaluate sampling algorithms}\label{sec:appendix-dgps}

We use the following data-generating process (DGP) to create synthetic data with 100k units for which the true ATE is equal to 2.5. We call this \textbf{Setting 1}: 
\begin{align*} 
C &\sim \text{Binomial}(0.5) \\
T &\sim \text{Binomial}(0.3) \\
Y &\sim 0.5 C + 1.5 T + 2 TC + \text{Normal}(0, 1)
\end{align*}
We set the researcher-specified confounding function $P^{*}(T=1|C) = \sigma(-1 + 2.5 C)$, where $\sigma$ is the logistic (expit) function, for RCT Rejection sampling and the same function as $f$ in \citet{gentzel2021and}'s Algorithm 2. 

\textbf{Setting 2} has the same DGP as Setting 1 but we change $T \sim \text{Binomial}(0.5)$. For Settings 1 and 2, we provide the backdoor adjustment estimator with the true adjustment set, such that it adjusts for both $C$ and the interaction term, $TC$.

\textbf{Setting 3} involves more covariates that are combined in non-linear ways:
\begin{align*}
C_1 \sim&\  \text{Binomial}(0.5) \\
C_2 \sim& \ C_1 - \text{Uniform}(-0.5, 1) \\
C_3 \sim& \ \text{Normal}(0, 1) \\
C_4 \sim& \ \text{Normal}(0, 1) \\
C_5 \sim& \ C_3 + C_4 +  \text{Normal}(0, 1) \\
T \sim& \ \text{Binomial}(0.3) \\
Y \sim& \ 0.5C_4 + 2 T C_1 C_2 - 1.5 T + C_2 C_3 + C_5 + \text{Normal}(0, 1) 
\end{align*}
\noindent In this setting, we set $P^{*}(T=1|C) = \sigma(0.5 C_1 + -0.7 C_2 + 1.2 C_3 + 1.5 C_4 + -1.2 C_5 +0.5 C_1 C_2)$. We provide the parametric backdoor estimator with the true adjustment set: $C_4, T C_1 C_2, C_2 C_3, C_5$. 

See Table~\ref{table:sampling-dgps} for the ATEs for each of the three settings. 

\begin{table*}[h!]
\centering 
\begin{tabular}{lr}
  \toprule
 \textbf{DGP Setting} & RCT ATE \\
  \toprule 
  1 - Linear, $P(T=1)=0.3$ & 2.48\\
 2 - Linear, $P(T=1)=0.5$ & 2.49\\ 
3 - Nonlinear, 5 $C$'s & -0.26\\
 \bottomrule
\end{tabular}
\caption{RCT ATEs for the synthetic DGPs. 
\label{table:sampling-dgps}
}
\end{table*}

\section{RCT dataset details}\label{sec:appendix-rct}

We expand on the details of the RCT described in Section~\ref{ssec:proof-concept-data}. The RCT was conducted on the Allen Institute for Artificial Intelligence's Semantic Scholar\footnote{\url{https://www.semanticscholar.org/}} platform.

 The experiment ran for 25 days from 2022-06-04 to 2022-06-28 and resulted in 53,281 unique users arriving on 50,833 unique paper pages. This is after filtering to users who are ``active'', meaning prior to the experiment they clicked somewhere on the website at least once.  
 Treatment is randomized for the combination of a unique user's browser plus user's device. We then post-process to recognize logged-in users across devices/browsers and remove them from the results if they switched treatments. The outcome of interest is if a user clicks on the ``enhanced reader'' button at least once during a session ($Y=1$).
 
 Intuitively, a positive causal effect is expected since we expect advertising a feature of the website to result in increased click rate. The final ATE was 0.113 as computed by a simple difference in conditional means $\E[Y\mid T=1] - \E[Y \mid T=0]$.


\section{Creating vocabulary for $X$}\label{sec:appendix-vocab}
A new vocabulary is created for each RCT subpopulation.
To create each vocabulary, we use binary indicators, remove stopwords, remove numbers and strip accents. Words must occur in at least 5 documents. We ignore terms that have a document frequency strictly lower than 10\%.

\section{Modeling} \label{sec:appendix-modeling}

\textbf{Base learners.} 
Base learners $Q_{T_0}$ and $Q_{T_1}$ in Equations~\ref{eqn:base-q0} and~\ref{eqn:base-q1} respectively are fit as described by \citet{kunzel2019metalearners} as ``T-learners'' (\emph{not} ``S-learners''), i.e.~we take all samples for which we have observed $T=0$ and then regress $X$ on $Y$ to get a trained model for $Q_{T_0}$ and likewise for units with observed $T=1$ and $Q_{T_1}$.
In preliminary experiments, we used the ``S-learner'' but found high-dimensional $X$ dominated $T$ and there were no differences learned between observed and counterfactual $T$ settings.

\textbf{Cross-fitting with cross-validation.}
We fit our models using \emph{cross-fitting} \citep{newey2018cross} which is also called sample-splitting \citep{hansen2000sample}. 
Here, we divide the data into $K$ folds. For each inference fold $j$, the other $K-1$ folds (shorthand $-j$) are used as the training set to fit the base learners---e.g.,~$\hat{Q}_{T_0}^{-j}$ or $\hat{g}^{-j}$---where the superscript here indicates the data the model is fit on. The single hyperparameter for logistic regression is selected via cross-validation, where the training set is again split into folds. No cross-validation is performed for CatBoost. Then for each unit $i$ in the inference set, we use the trained models to infer $\hat{Q}_{T_0}(x_i) = \hat{Q}^{-j}_{T_0}(x_i)$. Then  these are inserted into the plug-in estimators to compute the average treatment effect, $\tau$ for each estimator below. 

\textbf{Causal estimators.} 
After training base learners with cross-fitting, we implement the following plug-in causal estimators: backdoor adjustment (outcome regression) (Q), inverse propensity of treatment weighting (IPTW), adjusted inverse propensity of treatment weighting (AIPTW) \citep{Robins1994EstimationOR}. 

\begin{small}
\begin{align} 
\hat{\tau}_Q &:= \frac{1}{n} \sum_i \bigg( \hat{Q}_{T_1}(x_i) -  \hat{Q}_{T_0}(x_i) \bigg) 
\\ 
\hat{\tau}_{\text{IPTW}} &:= \frac{1}{n} \sum_i \bigg( \frac{y_i t_i}{\hat{g}(x_i)} - \frac{y_i(1-t_i)}{1-\hat{g}(x_i)} \bigg) \\
\hat{\tau}_{\text{AIPTW}} &:= \frac{1}{n} \sum_i \bigg(
\hat{Q}_{T_1}(x_i) - \hat{Q}_{T_0}(x_i)  + t_i \frac{y_i - \hat{Q}_{T_1}(x_i)}{\hat{g}(x_i)}
 \nonumber \\ 
& \hspace{3cm} - (1- t_i)  \frac{y_i - \hat{Q}_{T_0}(x_i)}{1-\hat{g}(x_i)} \bigg)
\end{align} 
\end{small}

We also use DoubleML \citep{chernozhukov2018double} which applies an ordinary least squares on residuals from the base learners
\begin{align}
\hat{\tau}_{\text{DML}} &:= \hat{E}\big[(y - \hat{Q}_X(x) ) \big| (t - \hat{g}(x))\big]
\end{align}

 \section{Proof of concept pipeline for an additional subpopulation}\label{sec-appendix:sub-b}

 As an additional proof of concept, we follow the same steps for the proof of concept in Section~\ref{sec:finite-data} but we use a different subpopulation---\emph{Subpopulation B} for which the covariates $C$ are Engineering and Business document categories. Table~\ref{t:descript-stats-subpopB} provides the descriptive statistics for this subpopulation. To measure the predictive accuracy for this subpopulation, we again model $P(C|X)$ with a logistic regression classifier. Averaged across held-out test folds, the F1 score is 0.92 and the average precision is 0.97.

 Addressing consideration \#4, we also create diagnostic plots for Subpopulation B in Figure~\ref{figure:diagnostics-subB}. In the subsequent pipeline, we use $P^*(T|C)$ in \eqref{eqn:our-p-star} parameterized by $\zeta_0=0.85, \zeta_1=0.15$. Examing the modeling results in Table~\ref{t:subpopB-modeling-results}, we hypothesize that the slight performance improvement in estimators using catboost over adjustment with the oracle set $C$ is due to inclusion of extra covariates in $X$ granting additional statistical efficiency \citep{rotnitzky2020efficient}.
 Regarding class imbalance, Subpopulation B is slightly more balanced (compared to Subpopulation A) with  55\% business, $\E[Y] = 0.05$ so that smallest category ($C=0$, $Y=1$) has 49 documents. However, this subpopulation also suffers from finite data and class imbalance issues, as evidenced by the low average precision for the inference folds of the outcome models. 

\begin{table}[h!]
  \centering
  \resizebox{0.7\linewidth}{!}{ 
      \begin{tabular}{llrrr}
      \toprule
      RCT Dataset & $C$ categories & n & RCT ATE & OR($C$, $Y$) \\
      \toprule
      Subpopulation B & Engineering, Business & 2,238 & 0.075 & 1.4 \\
      \bottomrule
  \end{tabular}
  }
  \caption{For \textbf{Subpopulation B}, RCT dataset descriptive statistics including the number of units in the subpopulation ($n$) and the odds ratio, $OR(C, Y)$.  \label{t:descript-stats-subpopB}
  }
\end{table}

\begin{figure}[h!]
\centering
    \includegraphics[width=0.9\columnwidth]{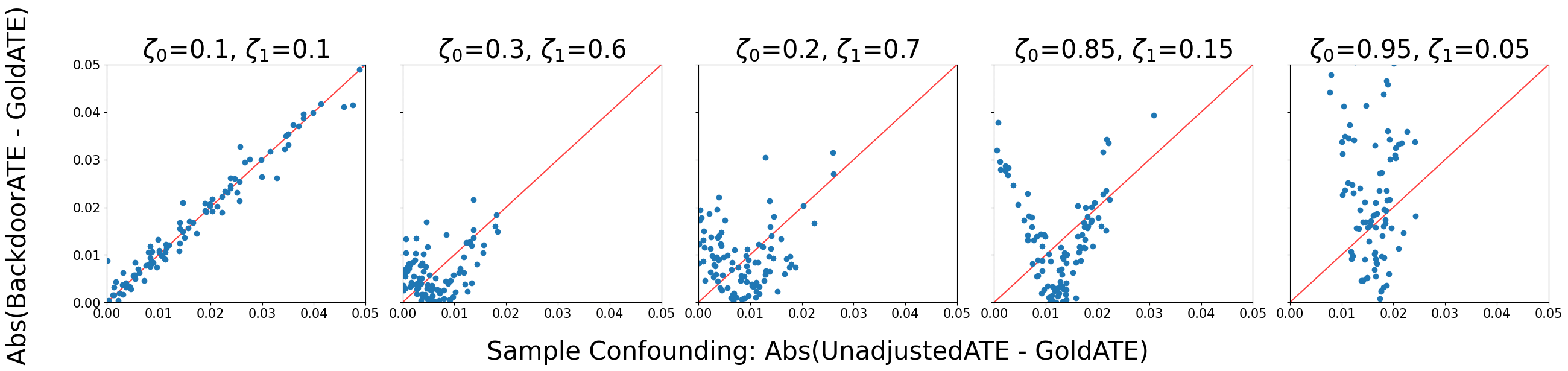}
\caption{\textbf{Diagnostic plot for Subpopulation B.} Each plot is the parameterization of the researcher-specified confounding functions, $P^{*}(T|C)$ in Equation~\ref{eqn:our-p-star}. Each blue dot is a different random seed (100 seeds total per plot/parameterization). \label{figure:diagnostics-subB}}
\end{figure}

\begin{table*}[h!]
\begin{minipage}{\linewidth}
  \centering
  \resizebox{0.95\linewidth}{!}{ 
      \begin{tabular}{lrrrrrrrr}
      \toprule
      & \multicolumn{2}{c}{$\hat{g}(x)$} 
      & \multicolumn{2}{c}{$\hat{Q}_{T_0}(x)$} 
      & \multicolumn{2}{c}{$\hat{Q}_{T_1}(x)$}
       & \multicolumn{2}{c}{$\hat{Q}_X(x)$}
      \\
       \textbf{Prediction Ave. Prec.} ($\uparrow$ better)
 & \multicolumn{1}{c}{train} 
 & \multicolumn{1}{c}{inference} 
 & \multicolumn{1}{c}{train} 
 & \multicolumn{1}{c}{inference} 
 & \multicolumn{1}{c}{train} 
 & \multicolumn{1}{c}{inference} 
 & \multicolumn{1}{c}{train} 
 & \multicolumn{1}{c}{inference} 
\\
      \toprule
      linear& 0.98 (0.01) \cellcolor {yellow!98.0}& 0.78 (0.02) \cellcolor {green!78.0}& 0.67 (0.24) \cellcolor {yellow!67.0}& 0.03 (0.02) \cellcolor {green!3.0}& 0.56 (0.22) \cellcolor {yellow!56.0}& 0.17 (0.03) \cellcolor {green!17.0}& 0.59 (0.17) \cellcolor {yellow!59.0}& 0.14 (0.02) \cellcolor {green!14.0}
      \\
      catboost (nonlinear)& 0.99 (0.0) \cellcolor {yellow!99.0}& 0.79 (0.02) \cellcolor {green!79.0}& 0.97 (0.02) \cellcolor {yellow!97.0}& 0.03 (0.01) \cellcolor {green!3.0}& 0.99 (0.01) \cellcolor {yellow!99.0}& 0.21 (0.04) \cellcolor {green!21.0}& 0.99 (0.0) \cellcolor {yellow!99.0}& 0.15 (0.03) \cellcolor {green!15.0} \\
      \bottomrule
  \end{tabular}
  }
  
 \resizebox{0.8\linewidth}{!}{ 
  \begin{tabular}{lrrrrrr}
      \toprule
      \textbf{Causal Rel.~Abs.~Error} ($\downarrow$ better)
      & Unadjusted & Backdoor $C$ & $\hat{\tau}_Q$ & $\hat{\tau}_{\text{IPTW}}$& $\hat{\tau}_{\text{AIPTW}}$& $\hat{\tau}_{\text{DML}}$ \\
      \toprule
      linear& 0.14 (0.06)& 0.14 (0.1)& 2.84 (1.43)\cellcolor {red!100}& 0.43 (0.94)\cellcolor {red!43.0}& 1.51 (2.28)\cellcolor {red!100}& 0.48 (0.2)\cellcolor {red!48.0}
      \\
      catboost (nonlinear)& 0.14 (0.06)& 0.14 (0.1)& 0.11 (0.08)\cellcolor {red!11.0}& 0.11 (0.08)\cellcolor {red!11.0}& 0.12 (0.09)\cellcolor {red!12.0}& 0.14 (0.1)\cellcolor {red!14.0} 
      \\
      \bottomrule
  \end{tabular}
  }
\end{minipage}

\captionof{table}{ Modeling results for Subpopulation B. 
\textbf{Top:} Predictive models' average precision (ave.~prec.) for training (yellow) and inference (green) data splits. \textbf{Bottom:} Causal estimation models' relative absolute error (rel.~abs.~error) between the models' estimated ATE and the RCT ATE. Here, darker shades of red indicate worse causal estimates. Baselines, unadjusted conditional mean on the samples (unadjusted) and the backdoor adjustment with the oracle $C$ (backdoor C), are uncolored. 
We use two baselearner settings: linear and catboost (nonlinear). We report both the average and standard deviation (in parentheses) over 100 random seeds during sampling. 
All settings use $P^*(T|C)$ in \eqref{eqn:our-p-star} parameterized by $\zeta_0=0.85, \zeta_1=0.15$.
\label{t:subpopB-modeling-results}}
\end{table*}

\tmlrChanges{\section{Confidence Interval Plots}\label{appendix-sec:ci}}

\begin{figure}[h!]
    \centering
    \begin{minipage}[b]{0.45\textwidth}  
        \includegraphics[width=\linewidth]{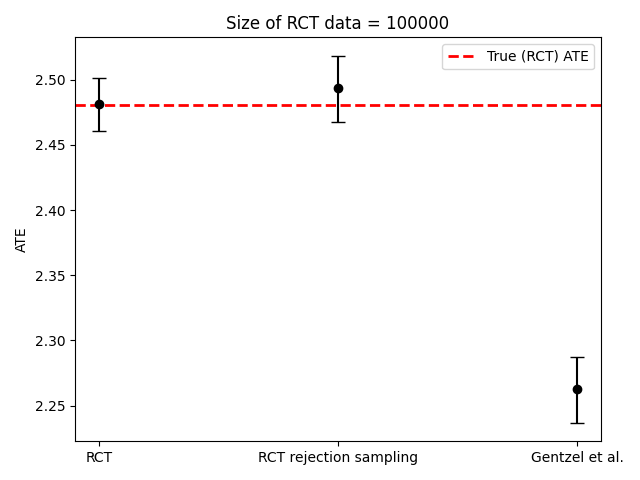}
        \label{fig:image1}
    \end{minipage}
    \hfill   
    \begin{minipage}[b]{0.475\textwidth} 
        \includegraphics[width=\linewidth]{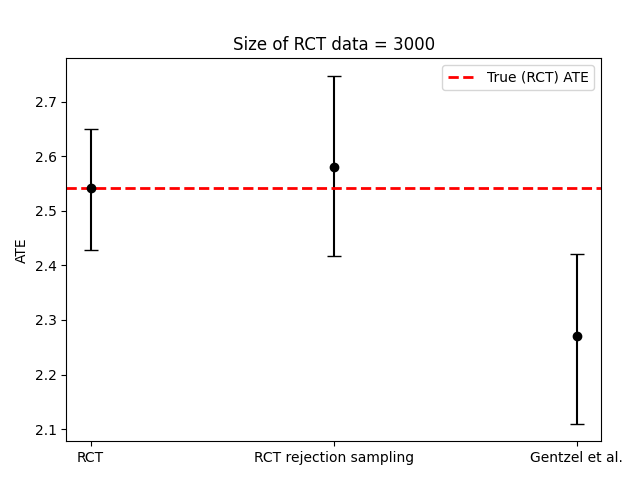}
        \label{fig:image2}
    \end{minipage}
    \caption{
    \tmlrChanges{
    For Synthetic DGP \#1 (single random seed) and 1000 bootstrap samples, we plot the 95\% confidence intervals for the original RCT (difference in means estimator), RCT rejection sampling with a parametric adjustment (with knowledge of the oracle adjustment), and Algorithm 2 from \citet{gentzel2021and} with a parametric adjustment (with knowledge of the oracle adjustment). The mean of the bootstrap samples is denoted by the dot.}}
    \label{fig:cis}
\end{figure}

\tmlrChanges{In Figure~\ref{fig:cis}, we plot the confidence intervals given by the bootstrap percentile method (see Section~\ref{subsec:synthetic} for more details). First, we construct confidence intervals for the original RCT data with a difference in means estimator. Then we plot the confidence intervals for the two sampling procedures—RCT rejection sampling and Algorithm 2 from Gentzel et al.---applied to that same RCT data. We examine synthetic DGP \#1 (see Section \ref{sec:appendix-dgps} for details on the synthetic DGPs) for a single random seed across two different sizes of synthetic RCT data, 100K samples and 3K samples.}

\tmlrChanges{
In the plots, the dot is the mean of the 1000 bootstrap samples. The horizontal bars indicate the endpoints of the 95\% confidence interval. We note that the confidence intervals for the sampling procedures are wider because sampling reduces the size of the dataset (roughly by half). Also as expected, all the confidence intervals are wider for the smaller (3K) RCT data setting. Note, in both settings, our RCT rejection sampling contains the true (RCT) ACE (red line in the plot) while Gentzel et. al’s algorithm does not.}

\hfill{3cm}

\tmlrChanges{\section{Varying Confounding Strength}\label{appendix-sec:vary-confound}}

\begin{figure}[h]
\centering
\includegraphics[width=0.4\textwidth]{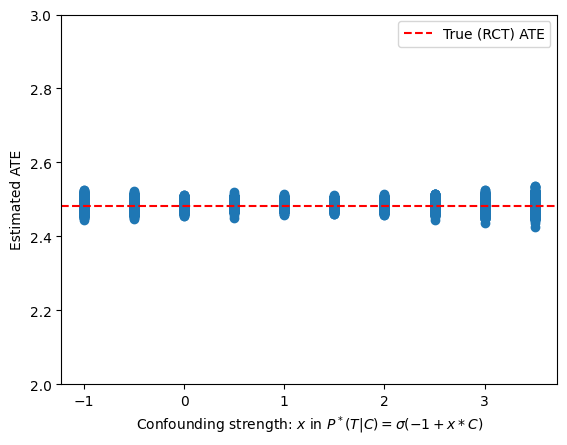}
\caption{
\tmlrChanges{
For Synthetic DGP \#1, we alter the confounding strength $x$ in the function we specify, $P^*(T=1|C) = \sigma(-1 + x\cdot C)$. We choose the lower and upper limits of $x$ such that $0.05 < P^*(T=1|C) < 0.95$ to ensure \emph{overlap} is satisfied. For each level of confounding strength, we run the rejection sampling algorithm with 1000 different random seeds. We plot the estimated ATE for the observational (confounded) dataset that is created after RCT rejection sampling. We find there are no substantial differences in the estimates due to changing the confounding strength.}
\label{fig:confound-strength}}
\end{figure}

\end{document}